\RequirePackage{fix-cm}
\documentclass[smallextended]{svjour3}       

\smartqed  
\usepackage{graphicx}
\usepackage{amsmath}
\usepackage{amssymb}
\usepackage{amsfonts}
\usepackage[numbers]{natbib} 
\usepackage[acronym,nowarn,section,nogroupskip,nonumberlist]{glossaries}
\usepackage{nccmath}
\usepackage[Symbol]{upgreek}

\usepackage{booktabs}
\usepackage{tabularx}
\usepackage[makeroom]{cancel}
\usepackage[normalem]{ulem}
\usepackage[hidelinks]{hyperref}
\usepackage{xcolor}
\hypersetup{
    colorlinks,
    linkcolor={blue!50!black},
    citecolor={blue!50!black},
    urlcolor={blue!80!black}
}

\usepackage{url}
\usepackage{color}
\usepackage{multirow, makecell}
\usepackage[export]{adjustbox}
\usepackage[ruled,vlined]{algorithm2e}
\usepackage{enumerate}
\usepackage{bm}
\usepackage{mathtools}
\usepackage{thmtools}
\usepackage{bigints}
\usepackage[breakable]{tcolorbox}

\usepackage{thmtools, thm-restate}

\makeatletter
\def\cl@chapter{\@elt {theorem}}
\makeatother

\usepackage[nameinlink,capitalise, poorman]{cleveref}


\spnewtheorem{thm}{Theorem}{\bfseries}{\itshape}
\crefname{thm}{Thm.}{thm}
\spnewtheorem{example}{Example}{\bfseries}{}
\spnewtheorem{ntn}{Notations}[section]{\bfseries}{\itshape}
\spnewtheorem{pro}{Proposition}[section]{\bfseries}{\itshape}
\spnewtheorem{dfn}{Definition}[section]{\bfseries}{\itshape}
\spnewtheorem{as}{Assumption}[section]{\bfseries}{\itshape}
\spnewtheorem{rem}{Remark}[section]{\bfseries}{\itshape}
\spnewtheorem{ob}{Observation}[section]{\bfseries}{\itshape}

\newcommand{\vx}{{x}}
\newcommand{\vz}{{x}}

\newcommand{\h}{\rho}

\newcommand{\of}[1]{\big( #1 \big)}
\newcommand{\ofb}[1]{\big[ #1 \big]}
\newcommand{\ofc}[1]{\left(#1\right)}
\newcommand{\bracket}[1]{\left\{ #1 \right\} }

\newcommand{\expqof}[1]{\left[1+(1-q) #1 \right]_{+}^{\frac{1}{1-q}} }

\newcommand{\vupi}{\bm{\tpi}}
\newcommand{\vwbeta}{\bm{\beta}}

\newcommand{\cX}{{\mathcal{X}}}
\newcommand{\cY}{{\mathcal{Y}}}

\newcommand{\tfwd}{{\mathcal{T}_t}}
\newcommand{\trev}{{\tilde{\mathcal{T}}_t}}
\newcommand{\bbR}{{\mathbb{R}}}

\newcommand{\tpi}{{\tilde{\pi}}}

\tcbset{mybox/.style={colback=red!1, width =\textwidth, boxrule=0.0pt, top=5pt,bottom=5pt,left=2pt, right=2pt},
  before upper=\setlength{\parindent}
  {0em}\everypar{{\setbox0\lastbox}\everypar{}}
}
\newtcolorbox{mybox}[1][]{breakable, mybox,#1}

\newcommand{\DKL}{D_{\text{KL}}}

\newcommand{\DJSa}{D^{(\alpha)}_{\text{JS}}}
\newcommand{\DJSb}{D^{(\beta)}_{\text{JS}}}

\newcommand{\DIS}{D_{\text{IS}}}
\newcommand{\DBeta}[1]{D_{\text{B}}^{(#1)}}

\newcommand{\KL}{{\text{KL}}}
\newcommand{\kl}{{\textsc{kl} }}
\newcommand{\Breg}[1]{D_{#1}}
\newcommand{\BregInfo}[1]{{\mathcal{I}_{#1}}}

\newcommand{\rhobreg}{D_{f}} 
\newcommand{\taubreg}{D_{f^*}} 
\newcommand{\canonical}{D_{f,f^*}}
\newcommand{\canonicaldual}{D_{f^*,f}}

\newcommand{\qpath}{\tpi^{(q)}_\beta}
\newcommand{\tpigeo}{\tpi_{\beta}^{(\text{geo})}}

\newcommand{\hl}[1]{{#1}}

\newcommand{\tpimix}{\pi^{(\text{arith})}_{\beta}}

\newcommand{\firstarg}{{{\tpi_a}}}
\newcommand{\secondarg}{{{\tpi_b}}}
\newcommand{\firstargn}{{\pi_a}}
\newcommand{\secondargn}{{\pi_b}}
\newcommand{\firstargparam}{{\pi_{\btheta_a}}}
\newcommand{\secondargparam}{{\pi_{\btheta_b}}}
\newcommand{\tpizero}{\tpi_0}

\newcommand{\GammaRT}[1]{\Gamma^{(#1)}}


\newcommand{\btheta}{\bm{\theta}}
\newcommand{\boldtheta}{\btheta}
\newcommand{\bthetaprime}{\btheta^{\prime}}
\newcommand{\boldeta}{\bm{\eta}}
\newcommand{\bt}{\bm{T}(\vz)}

\newcommand{\base}{g}
\newcommand{\expbase}{\base(\vz)}
\newcommand{\expfam}{\pi_{\btheta}}
\newcommand{\qexpfam}{\pi^{(q)}_{\btheta}}
\newcommand{\texpfam}{\tilde{\pi}_{\btheta}}
\newcommand{\tqexpfam}{\tilde{\pi}^{(q)}_{\btheta}}

\newcommand{\Z}{\mathcal{Z}}

\newcommand{\norm}{\psi}


\newcommand{\potential}{{\Psi}}
\newcommand{\potentialf}{{\Psi_{f}}}
\newcommand{\potentialdual}{{\Psi^*_{f^*}}}

\DeclareMathOperator*{\argmin}{arg\,min}



\newacronym{AIS}{\textsc{ais}}{annealed importance sampling}
\newacronym{BA}{\textsc{ba}}{Barber-Agakov}
\newacronym{BQ}{\textbf{bq}}{Bayesian Quadrature}
\newacronym{AUC}{\textbf{auc}}{area under the curve}
\newacronym{BAR}{\textsc{bar}}{Bennett's Acceptance Ratio}
\newacronym{GAN}{\textsc{gan}}{generative adversarial network}
\newacronym{GANs}{\textsc{gan}}{generative adversarial networks}
\newacronym{BDMC}{\textsc{bdmc}}{Bidirectional Monte Carlo}
\newacronym{LREF}{\textsc{lref}}{likelihood ratio exponential family}
\newacronym{JS}{\textsc{js}}{Jensen-Shannon}
\newacronym{CFT}{\textsc{cft}}{Crooks's Fluctuation Theorem}
\newacronym{ELBO}{\textsc{elbo}}{evidence lower bound}
\newacronym{EUBO}{\textsc{eubo}}{Evidence Upper Bound}
\newacronym{HMC}{\textsc{hmc}}{Hamiltonian Monte Carlo}
\newacronym{IB}{\textsc{ib}}{Information Bottleneck}
\newacronym{MI}{\textsc{mi}}{Mutual information}
\newacronym{MINE}{\textsc{mine}}{Mutual Information Neural Estimation}
\newacronym{JE}{\textsc{je}}{Jarzynksi equality}
\newacronym{JSD}{\textsc{jsd}}{Jensen-Shannon divergence}
\newacronym{IS}{\textsc{is}}{importance sampling}
\newacronym{IWAE}{\textsc{iwae}}{importance-weighted autoencoder}
\newacronym{GIWAE}{\textsc{giwae}}{\textit{Generalized} \textsc{iwae}}
\newacronym{MCMC}{\textsc{mcmc}}{Markov Chain Monte Carlo}
\newacronym{RD}{\textsc{rd}}{rate-distortion}
\newacronym{RBM}{\textsc{rbm}}{Restricted Boltzmann Machines}
\newacronym{RWS}{\textsc{rws}}{reweighted wake-sleep}
\newacronym{SGD}{\textsc{sgd}}{stochastic gradient descent}
\newacronym{SMC}{\textsc{smc}}{Sequential Monte Carlo}
\newacronym{PT}{\textsc{pt}}{Parallel Tempering}
\newacronym{SNIS}{\textsc{snis}}{self-normalized importance sampling}
\newacronym{TI}{\textsc{ti}}{thermodynamic integration}
\newacronym{TVI}{\textsc{tvi}}{thermodynamic variational inference}
\newacronym{TVO}{\textsc{tvo}}{thermodynamic variational objective}
\newacronym{VAE}{\textsc{vae}}{variational autoencoders}
\newacronym{VAEc}{\textsc{vae}}{Variational Autoencoders}
\newacronym{VI}{\textsc{vi}}{variational inference}

\newcommand{\mycor}[1]{\hyperref[cor:#1]{Cor.~\ref*{cor:#1}}}
\newcommand{\myeq}[1]{\hyperref[eq:#1]{Eq.~(\ref*{eq:#1})}}
\newcommand{\mysecondeq}[1]{\hyperref[eq:#1]{(\ref*{eq:#1})}}
\newcommand{\mysec}[1]{\hyperref[sec:#1]{Sec.~\ref*{sec:#1}}}
\newcommand{\myapp}[1]{\hyperref[app:#1]{App.~\ref*{app:#1}}}
\newcommand{\myex}[1]{\hyperref[example:#1]{Ex.~\ref*{ex:#1}}}
\newcommand{\myalg}[1]{\hyperref[alg:#1]{Alg. ~\ref*{alg:#1}}}
\newcommand{\mytable}[1]{\hyperref[tab:#1]{Table~\ref*{tab:#1}}}
\newcommand{\mytab}[1]{\hyperref[tab:#1]{Table~\ref*{tab:#1}}}
\newcommand{\myfig}[1]{\hyperref[fig:#1]{Fig.~\ref*{fig:#1}}}
\newcommand{\mylemma}[1]{\hyperref[lemma:#1]{Lemma~\ref*{lemma:#1}}}
\newcommand{\myprop}[1]{\hyperref[prop:#1]{Prop.~\ref*{prop:#1}}}
\newcommand{\mythm}[1]{\hyperref[thm:#1]{Thm.~\ref*{thm:#1}}}

\newcommand{\tablelabel}{\alpha(\beta,\rho,\tau) \text{ for } \Gamma}

\newcommand{\wi}{\beta}

\newcommand{\vectorinp}{u}
\newcommand{\bvector}{\bm{\vectorinp}}

\newcommand{\robpara}[1]{\textbf{#1}}
\newcommand{\robsubsection}[1]{\subsection{\textbf{#1}}}
\newcommand{\robsubsubsection}[1]{\subsubsection{\textbf{#1}}}
\newcommand{\vheader}{{\vspace*{-.24cm}}}

\begin{document}
\sloppy

\title{Variational Representations of Annealing Paths: \\
Bregman Information under Monotonic Embedding
}
\titlerunning{Variational Representations of Annealing Paths}

\author{Rob Brekelmans         \and
        Frank Nielsen 
}

\institute{Rob Brekelmans \at
            Vector Institute \\
            USC Information Sciences Institute \\
              \email{rob.brekelmans@vectorinstitute.ai}   \\
           \and
           Frank Nielsen \at
           Sony Computer Science Laboratories Inc, \\
             \email{frank.nielsen@acm.org}
}
 
\date{}

\maketitle


\begin{abstract}
Markov Chain Monte Carlo methods for sampling from complex distributions and estimating normalization constants often simulate samples from a sequence of intermediate distributions along an \textit{annealing path}, which bridges between a tractable initial distribution and a target density of interest.  Prior works have constructed annealing paths using quasi-arithmetic means, and interpreted the resulting intermediate densities as minimizing an expected divergence to the endpoints.  
To analyze these variational representations of annealing paths, we extend known results showing that the arithmetic mean over arguments minimizes the expected Bregman divergence to a single representative point.   In particular, we obtain an analogous result for quasi-arithmetic means, when the inputs to the Bregman divergence are transformed under a monotonic embedding function.  Our analysis highlights the interplay between quasi-arithmetic means, parametric families, and divergence functionals using the rho-tau representational Bregman divergence framework of \citet{zhang2004divergence,zhang2013nonparametric}, and associates common divergence functionals with intermediate densities along an annealing path.

\keywords{Bregman divergence
\and Bregman Information
\and monotone embedding 
\and quasi-arithmetic means 
\and non-parametric information geometry
\and gauge freedom  
\and annealing paths 
\and Markov Chain Monte Carlo }
\end{abstract}

\section{Introduction}
\vheader

\gls{MCMC} methods such as \gls{AIS} \citep{neal2001annealed}, \gls{SMC} \citep{del2006sequential}, \gls{TI} \citep{ogata1989monte,gelman1998simulating} and \gls{PT} \citep{earl2005parallel} are fundamental tools in statistical physics and machine learning, which can be used to sample from complex distributions, estimate normalization constants, and calculate physical quantities such as entropy or free energy.   Such tasks appear in the context of Bayesian inference or model selection, where the posterior over latent variables or model parameters is usually intractable to sample and evaluate.

 \gls{MCMC} algorithms often 
 decompose these problems
 into a sequence of easier subproblems along an \textit{annealing path} of intermediate densities $\{\tpi_{\beta_t}(\vz)\}_{t=0}^T$, which bridge between a tractable (often normalized) density $\tpi_0(\vz)$ and the complex target density of interest $\tpi_1(\vz)$.  Using the notation $\tpi$ to indicate the density of an unnormalized measure with respect to the Lebesgue measure,  we are interested in sampling from $\pi_1(\vz) \propto \tpi_1(\vz)$ or estimating the normalization constant {$\mathcal{Z}_1 = \int \tpi_1(\vz) d\vz$}, its logarithm $\log \mathcal{Z}_1$, or the ratio $\mathcal{Z}_1/\mathcal{Z}_0$, where $\mathcal{Z}_0 = \int \tpi_0(\vz) d\vz$.   
Transition kernels $\mathcal{T}_t(\vz_t|\vz_{t-1})$ such as importance resampling, Langevin dynamics \citep{rossky1978brownian, welling2011bayesian}, \gls{HMC} \citep{duane1987hybrid, neal2011mcmc, betancourt2017geometric}, and accept-reject steps are used to transform samples to more accurately simulate the target density.
For example, we describe the annealed importance sampling algorithm (\gls{AIS}, \citep{neal2001annealed, jarzynski1997equilibrium}) in \myalg{ais}, which provides approximate target samples and an unbiased estimator of the ratio of normalization constants.

\begin{figure*}[b]
\centering
    \begin{minipage}{0.99\textwidth}
            \begin{algorithm}[H]
                \SetKwInOut{Input}{input}
                \SetKwInOut{Ret}{return}
                \Input{Endpoint densities $\tpizero(\vz), \tpi_1(\vz)$ \\
                Schedule $ \{ \beta_t \}_{t=0}^T$ and 
                Annealing Path $\beta_t \mapsto \tpi_{\beta_t}(\vz)$ (e.g. \cref{eq:general_paths}) \\
                Transition Kernels $\tfwd(\vz_{t} | \vz_{{t-1}})$ leaving $\pi_{\beta_{t-1}}$ invariant}
                 \For{k = 1 \text{to} K}{
                        $\quad\,\,\,~ \vz_0^{(k)} \sim \tpizero(\vz)$,
                        \qquad \qquad \,\,\,\,\,~
                        $w_0^{(k)} \leftarrow 1$
                    }
                \For{t = 1 to T}{
                    \For{k = 1 \text{to} K}{
                    $\vz_t^{(k)} \sim \tfwd(\vz^{(k)}_{t} | \vz_{{t-1}}^{(k)})$,
                    \qquad
                    $w_t^{(k)} \leftarrow$ $w_{t-1}^{(k)} \frac{\tpi_{\beta_{t}}(\vz_{t }^{(k)})}{\tpi_{\beta_{t-1}}(\vz_{t }^{(k)})}$\\
                    }
                    }
                    \Return{ 
                    $\text{\upshape Approximate samples:}$
                    \,\, $\vz_T \sim \pi_T(\vz)$
                    \\[1.1ex]
                    $\text{\upshape \qquad \quad \,\,\, Unbiased Estimator:}$ 
                    \,\, $\mathcal{Z}_T / \mathcal{Z}_0 = \mathbb{E}\Big[ \frac{1}{K} \sum_{k=1}^K w_T^{(k)} \Big]$}
                    \vspace{-2pt}
                    \caption{Annealed Importance Sampling}
                    \label{alg:ais}
            \end{algorithm}
        \end{minipage}
        \vspace{-35pt}
\end{figure*}

Most commonly, 
intermediate unnormalized densities are
constructed using geometric averaging $\tpigeo(\vz) = \tpi_0(\vz)^{1-\beta} \tpi_1(\vz)^{\beta}$ of the initial 
and target 
densities, with $\beta\in [0,1]$.
Viewing the geometric path as a quasi-arithmetic mean \citep{kolmogorov1930} under transformation by the natural logarithm, \citet{masrani2021q} propose annealing paths using the deformed logarithm transformation function rooted in nonextensive thermodynamics (\citep{tsallis2009introduction},\citep{naudts2011generalised} Ch 7, defined in \myeq{qlogexpdef}),
\begin{align}
\begin{split}
    \log \tpigeo(\vz) &= (1-\beta) \log \tpi_0(\vz) \hphantom{_q}  +  \beta   \log \tpi_1(\vz) \\
    \log_q \qpath(\vz) &= (1-\beta) \log_{q} \tpi_0(\vz)  + \beta  \log_q \tpi_1(\vz).
   \end{split}
        \label{eq:general_paths}
\end{align}
Choosing a suitable path may facilitate
more accurate \gls{MCMC} estimators with fewer intermediate densities, as evidenced by experiments in \citep{grosse2013annealing, masrani2021q, syed2021parallel}.

\begin{figure}[t]
    \centering
    \includegraphics[width=0.95\textwidth, center]{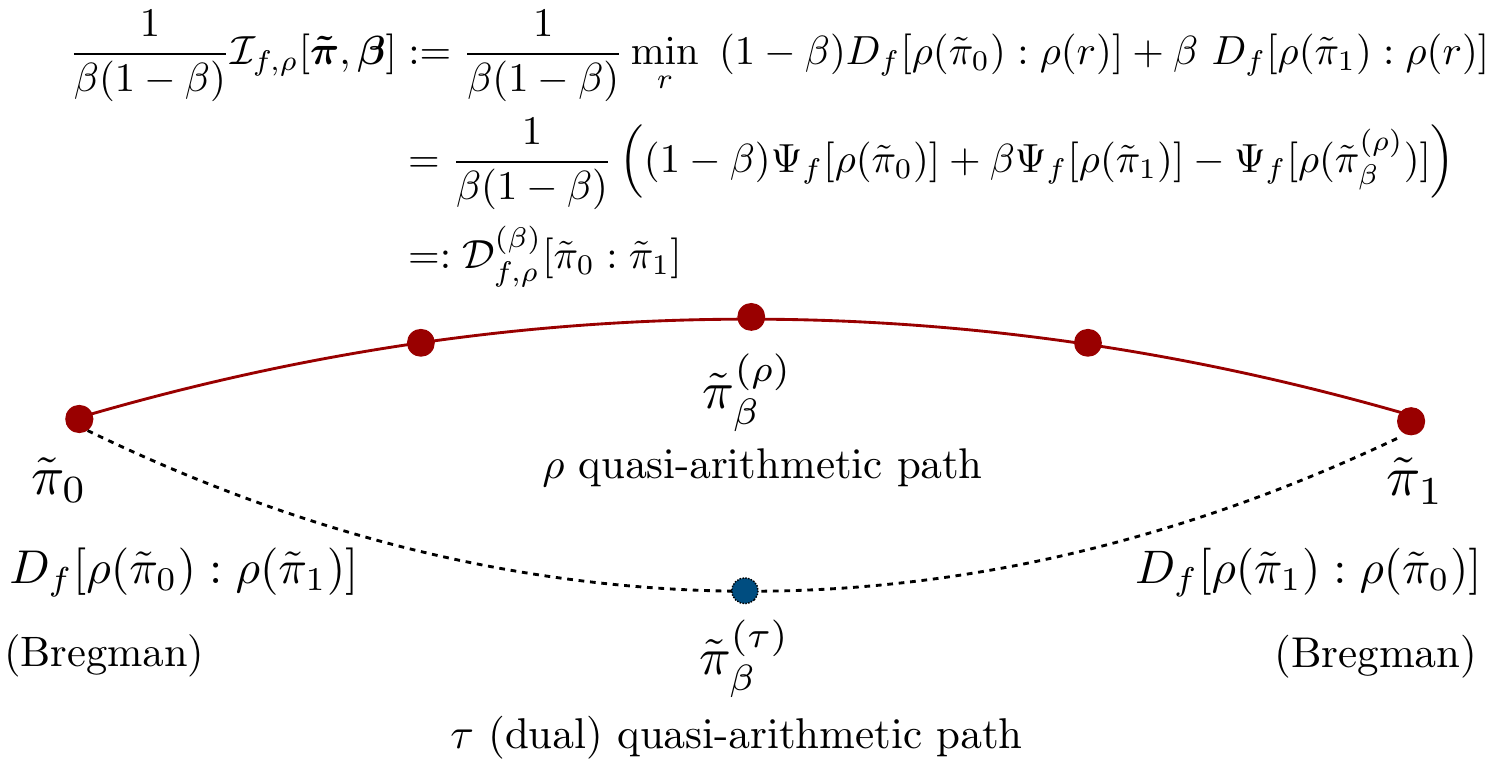}
    \caption{ Illustration of our main results.
    The choice of a monotonic representation function $\rho$, along with either a convex function $f$ or a second monotonic function $\tau$, specifies a Bregman divergence for $\beta = \{0,1\}$.   The quasi-arithmetic path $\tpi_{\beta}^{(\rho)}$ with mixing weight $\beta$ minimizes the expected Bregman divergence $D_{\Psi_f}[\rho(\tpi_a):\rho(\tpi_b)]$ to the endpoints $\{ \tpi_0, \tpi_1 \}$, where optimization is performed over the second argument and $\Psi_f[\rho(\tpi)] \coloneqq \int f(\rho(\tpi(\vx)) d\vx$.   
    The (scaled) value of this objective is called the Bregman Information $\mathcal{I}_{f,\rho}^{(\beta)}$, and 
    associates to each intermediate density $\tpi_{\beta}^{(\rho)}$ a divergence $D_{f,\rho}^{(\beta)}[\tpi_0:\tpi_1] = \frac{1}{\beta(1-\beta)} \mathcal{I}_{f,\rho}^{(\beta)}[\bm{\tpi}, \bm{\beta}]$ \citep{zhang2004divergence} which compares $\tpi_0$ and $\tpi_1$ (\cref{example:zhang_div}).
    Finally, the quasi-arithmetic path in the $\rho$-representation is a geodesic with respect to the affine connection induced by the Bregman divergence $D_{\Psi_f}[\rho(\tpi_a):\rho(\tpi_b)]$ for any $\Psi_f$ (\mythm{geodesic}), which is also true for the $\tau$-representation, dual divergence, and dual connection.
    }
    \label{fig:my_label}
    \vspace*{-.1cm}
\end{figure}

Intriguingly, \citet{grosse2013annealing, masrani2021q} show that densities along the paths in \myeq{general_paths} minimize the expected divergence to the endpoints
\normalsize
\begin{align}
\hspace*{-.3cm} 
\begin{split}
\tpi_{\beta}^{(\text{geo})}(\vz) &=  \argmin \limits_{\tpi} \,\, (1-\beta) \DKL\big[ \tpi: \tpizero \big] + \beta  \DKL[ \tpi : \tpi_1 \big]  \\
     \qpath(\vz) &=  \argmin \limits_{\tpi} \,\, (1-\beta)  D_{A}^{(\alpha)} \big[ \tpizero: \tpi \big] + \beta D_{A}^{(\alpha)}[ \tpi_1: \tpi \big] 
        \end{split}
       \label{eq:general_div_min}
\end{align}
\normalsize
where $D_{A}^{(\alpha)}$ indicates Amari's $\alpha$-divergence \citep{amari1982differential, amari2007integration, amari2016information} using the reparameterization $\alpha \leftarrow \frac{1+\hat{\alpha}}{2}$ with $\alpha = q$,
\small 
\begin{align}
   D_{A}^{(\alpha)}[\firstarg:\secondarg]   &=\frac{1}{\alpha} \int \firstarg(\vz) d\vz + \frac{1}{1-\alpha} \int \secondarg(\vz) d\vz - \frac{1}{\alpha}\frac{1}{1-\alpha} \int \firstarg(\vz)^{1-\alpha}\secondarg(\vz)^{\alpha} d\vz.  \label{eq:amari_alpha}
\end{align}
\normalsize
The minimization in \myeq{general_div_min} is reminiscent of the problem of finding the barycenter, or $\argmin_{\mu} \mathbb{E}_{\nu(u)} D[u:\mu]$, of a random variable $U \sim \nu$ with respect to a statistical divergence $D$ and sampling measure $\nu(u)$ \citep{sibson1969information, nock2005fitting, nielsen2011burbea}.   While properties of this optimization depend on the choice of divergence in general, \citet{banerjee2005optimality,Banerjee2005} show that for \textit{any} Bregman divergence, minimization in the second argument yields the arithmetic mean {$\mu^* = \mathbb{E}_{\nu}[u]$} as the unique optimal barycenter. 
The property of the arithmetic mean as the barycenter  in fact
characterizes the Bregman divergences among divergences or losses \citep{banerjee2005optimality}.
At this minimizing argument, the expected divergence in \myeq{general_div_min} reduces to a gap in Jensen's inequality known as the Bregman Information \citep{Banerjee2005}.

Our main result in \mythm{breg_info} uses the rho-tau Bregman divergence framework of \citet{zhang2004divergence, zhang2013nonparametric} to extend the Bregman Information results of \citet{Banerjee2005} to quasi-arithmetic means.   A key observation is that the minimizing arguments in \myeq{general_paths} are arithmetic means after applying a monotonic \textit{representation function} $\rho(\tpi) = \log_q \tpi$.  
The Bregman Information further associates a divergence functional with each intermediate density along a $q$-path between $\tpi_0$ and $\tpi_1$ (\myfig{my_label}), encompassing many common divergences as special cases (\mytab{jensen_gaps}). 
Our analysis highlights the interplay between quasi-arithmetic means and divergence functionals, and naturally bridges between parametric \citep{amari2000methods} and nonparametric information geometry \citep{zhang2013nonparametric, naudts2018rho}.  
Via the intuitive example of annealing paths, we seek to familiarize a wider machine learning audience with the referential-representational biduality in information geometry \citep{zhang2004divergence, zhang2013nonparametric}.

While the foundation for many of the above results appears in \citet{zhang2004divergence, zhang2013nonparametric}\citep{zhang2021entropy}, our analysis emphasizes divergence minimization properties (\mysec{rho_tau_breg}) and parametric interpretations (\mysec{parametric}) by connecting the Bregman Information and representational $\alpha$-divergence studied in \citep{zhang2004divergence}.   
In \mythm{geodesic}, we show that annealing paths derived from quasi-arithmetic means are geodesic curves with respect to the affine connections induced by rho-tau Bregman divergences. Finally, we provide novel variational representations of $q$-paths (\myeq{general_paths}, \citep{masrani2021q}) as the solution to an expected $\beta$-divergence \citep{basu1998robust, eguchi2006information} or Cichocki-Amari $(\alpha,\beta)$-divergence \citep{cichocki2010families} minimization in \mysec{examples}.   Since \textsc{mcmc} applications consider unnormalized densities as input with the goal of calculating normalization constants, we discuss in \cref{example:normalized} how our analysis differs from prior work involving quasi-arithmetic means of normalized densities \citep{sibson1969information, amari2007integration, eguchi2016information, wong2021tsallis}. 
 \vspace*{.1cm}

\textit{Notation}   \hl{Throughout this work, we consider a measure space $(\cX, \Sigma, dx)$.  We denote 
probability densities with respect to $dx$ as  $\pi(x)$ and indicate unnormalized densities, which may not integrate to 1, using $\tpi(x)$.  All integrals $\int (\cdot)~dx$ are assumed to be over the domain $\cX$.}

\vspace*{-.4cm}
\section{Bregman Divergence
under Monotonic Embedding}
\vheader
We begin by reviewing the notion of a quasi-arithmetic mean associated with a monotonic embedding function in \mysec{generalized_means}, and describe how such embedding functions can be used to define divergence functionals in the rho-tau Bregman divergence framework of \citep{zhang2004divergence, zhang2013nonparametric, naudts2018rho, zhang2021entropy} in \mysec{rhotau}. 

\vheader
\vheader
\robsubsection{{Quasi-Arithmetic Means}}\label{sec:generalized_means}
\vheader
Consider a \hl{strictly monotonic, continuously differentiable \textit{representation function} $\rho :  \cX_\rho \subset \bbR \rightarrow \cY_\rho \subset\bbR$.}~
Given a set of input elements ${\bm{u}} = (u_1, ..., u_N)$ with $u_i \in \cX_\rho$ and nonnegative mixing weights ${\bm{\wi}} = (\wi_1, ..., \wi_N)$ which are normalized $\sum_{i=1}^N \wi_i = 1$, 
the \textit{quasi-arithmetic mean} \citep{kolmogorov1930} is 
\begin{align}
\mu_{\h}({\bf{u}, \bm{\wi}} ) = \h^{-1} \left(\sum_{i=1}^N \wi_i \cdot \h(u_i)\right). \label{eq:abstract_mean}
\end{align}
We will primarily focus on the setting where each input $u_i = \tpi_i(\vz)$ is a density function.
Since the function $\rho$
is monotonic and thus invertible, we may represent a given density $\tpi(\vz)$ 
using $\rho\big(\tpi(\vz)\big)$ without loss of information.   
This 
is known as the \textit{$\rho$-representation} of $\tpi$ \citep{amari2007integration, amari2016information}.

The property that $\mu_{\h}(\bm{u},\bm{\wi})$ is \textit{$\rho$-affine}, or linear in the $\rho$-representation of $u_i$'s \citep{zhang2004divergence}, will play a key role in our later analysis.   In other words, $\mu_{\h}(\bf{u}, \bm{\wi})$ is an arithmetic mean of the inputs after transformation by $\h(u)$
\begin{align}
\h(\mu_{\h}({\bf{u},\bm{\wi}} ))=\sum_{i=1}^N \wi_i \cdot \h(u_i), \label{eq:abstract_affine}
\end{align}
   The quasi-arithmetic mean is also invariant to the affine transformations of $\rho$, with $\mu_{\h}(\bf{u},\bm{\wi}) = \mu_{\h_{c,a}}(\bf{u},\bm{\wi})$ for $\h_{c,a}(u) = c \rho(u) + a$, $c > 0$, and $a \in \mathbb{R}$.

As a primary example of the representation function $\rho$, we will consider the family of $q$-deformed logarithms \citep{tsallis2009introduction, naudts2011generalised} from nonextensive thermodynamics.   
For $q, u > 0$, the $q$-logarithm and its inverse, the $q$-exponential, are defined
\begin{align}
   \rho_q(u) \coloneqq \log_q(u) = \frac{1}{1-q}\left( u^{1-q} - 1\right) \quad \qquad  \exp_q(t) = \expqof{t} \, ,\label{eq:qlogexpdef}
\end{align}
where $[\cdot]_+=\max(\cdot,0)$. 
While $\log_q(u)$ is an affine transformation of the $\alpha$-representation \citep{amari2007integration} (for $\alpha=2q-1$), we will use the parameter $q$ to avoid later confusion as to the role of the parameter $\alpha$ \citep{zhang2004divergence, zhang2013nonparametric}.
It can be shown that $\log_q(u)$ is concave and strictly increasing in $u$ \citep{naudts2011generalised}  
while, taking the limiting behavior as $q\rightarrow 1$, we recover the natural logarithm $\log(u)$ and standard exponential $\exp(t)$ as its inverse. 
The more general family of $\phi$-deformed logarithms (\citep{naudts2004estimators, naudts2011generalised, naudts2018rho}
), including the $\kappa$-logarithm of \cite{kaniadakis2002new}, might also be considered.   Our main results apply for arbitrary monotonic representation functions $\rho(u)$.

\vheader
\robsubsection{{Rho-Tau Bregman Divergence}}\label{sec:rhotau}
The rho-tau Bregman divergence of \citet{zhang2004divergence,zhang2013nonparametric} provides an elegant framework for understanding the relationship between divergence functions and representations of probability densities under monotonic embedding, and will form the basis 
of our later analysis.  

Consider $\rho$ and $\tau$  to be scalar functions which, applied to an unnormalized density function $\tpi(\vz)$, map $\tpi(\vz) \mapsto \rho\of{\tpi(\vz)}$.  
We consider a proper, strictly convex, lower semi-continuous function $f : \mathbb{R} \mapsto \mathbb{R}$ applied to $\rho(\tpi) \in \mathbb{R}$, and define $f^*$ to be its convex conjugate.  
Using the Fenchel-Moreau biconjugation theorem, we can express $f$ or $f^*$ via a conjugate optimization,
\begin{align}
    f^*(\tau) = \sup \limits_{\rho} \, \rho \cdot \tau - f(\rho)    \qquad \qquad  f(\rho) =  \sup \limits_{\tau} \, \rho \cdot \tau - f^*(\tau)  . \label{eq:conj_tau}
\end{align}
Solving for the optimizing arguments above suggests the conjugacy conditions,
\begin{align}
    \tau = f^{\prime}(\rho) = \left( ( f^{*} )^{\prime}\right)^{-1}(\rho) \qquad \qquad \rho= (f^{*})^{\prime}(\tau) =  (f^\prime)^{-1}(\tau) . \label{eq:conj_condition}
\end{align}
\citet{zhang2004divergence, zhang2013nonparametric} refer to these choices of $\rho$ and $\tau$ as \textit{conjugate representations} with respect to the convex function $f$, emphasizing that the choice of two of these functions $(\rho, \tau)$ or $(\rho, f)$ determines the third one.
In fact, for any choice of $\rho$, $\tau$, it is possible to find an appropriate $f$, with $f^{\prime} = \tau \circ \rho^{-1}$ due to the fact that set of strictly monotonic functions form a group, where the group action is function composition \citep{zhang2015monotone}.  
Finally, note that monotonicity of $\rho = (f^*)^{\prime}(\tau)$ is guaranteed if $f^*$ is strictly convex.  We will indeed interpret $\rho(\tpi)$ as the representation function for the quasi-arithmetic mean in \mysec{examples}.

We consider using the convex functions $f(\rho)$ or $f^*(\tau)$ to generate \textit{decomposable} Bregman divergences \citep{zhang2004divergence,zhang2013nonparametric}, where the arguments are now expressed as the $\rho(\tpi)$ or $\tau(\tpi)$ representations of the input density functions.  
{Restricting attention to $\tpi$ such that $\int f(\rho(\tpi(x))) dx < \infty$ and $\int f^*(\tau(\tpi(x))) dx < \infty$, we consider the decomposable Bregman divergence
\begin{align}
    \rhobreg &\big[\rho \big(\firstarg\big):\rho \ofc{\secondarg}\big] \label{eq:rhotau_tau} \\
    &= \int f\big(\rho\of{\firstarg(\vz)} \big) -  f\big(\rho\of{\secondarg(\vz)} \big) - \big( \rho\of{\firstarg(\vz)} - \rho\of{\secondarg(\vz)} \big) 
f^{\prime}\left(\rho\of{\secondarg(\vz)}\right) d\vz, \nonumber
\end{align}
\normalsize
We will use the notation $f(\rho_{\tpi}(\vz)) \coloneqq f(\rho(\tpi(\vz)))$ and $f^*(\tau_{\tpi}(\vz)) \coloneqq f^*(\tau(\tpi(\vz)))$ moving forward, as a shorthand which emphasizes that our convex dualities are 
with respect to the scalar output of the representation functions $\rho$ and $\tau$.

Finally, we define the negative rho-tau entropy functionals from \cite{naudts2018rho} as
\begin{align}
     \potentialf[\rho_{\tpi}] = \int f\of{\rho\of{\tpi(\vz)}} d\vz \qquad  \qquad \potentialdual[\tau_{\tpi}] = \int f^*\of{\tau\of{\tpi(\vz)}} d\vz. \label{eq:decomposable_generators}
\end{align}
While we primarily work with the definition in \cref{eq:rhotau_tau}, it might also be viewed as a functional Bregman divergence
$\rhobreg \big[\rho \big(\firstarg\big):\rho \ofc{\secondarg}\big] = \potentialf[\rho_{\firstarg}] - \potentialf[\rho_{\secondarg}] - \delta\potential_f[\rho_{\secondarg}]( \rho_{\firstarg} - \rho_{\secondarg})$
for an appropriate notion of functional derivative $\delta\potential_f$ (see \cite{frigyik2008functional}).  
As a result, we will also refer to \cref{eq:decomposable_generators} as decomposable generators.

The rho-tau Bregman divergence may also be written with a mixed parameterization or canonical form (\citep{amari2000methods} 3.4) where, recognizing the definition of the conjugate $f^*(\tau_{\secondarg})$  in \cref{eq:rhotau_tau}, we have
\begin{align}
\hspace*{-.2cm}
\canonical \big[\rho({\firstarg}):\tau({\secondarg})\big]  \coloneqq
\potentialf[\rho({\firstarg})] + \potentialdual[\tau(\secondarg)] - \int \rho\of{{\firstarg}(\vz)} \tau\of{\secondarg(\vz)}
     d\vz .
\end{align}
Finally, the \textit{dual} rho-tau Bregman divergence $\taubreg \big[\tau({\secondarg}):\tau({\firstarg})\big]$, which uses the conjugate function $f^{*}(\tau)$ and representation $\tau$ in \cref{eq:rhotau_tau} or the decomposable generator $\potentialdual[\tau]$, is equivalent to the original divergence up to reversing the order of arguments.
The various divergences are related as follows, 
\begin{equation}
\resizebox{\textwidth}{!}{$
   \rhobreg \big[\rho({\firstarg}): \rho({\secondarg})\big]= \canonical \big[\rho({\firstarg}):\tau({\secondarg})\big] = \canonicaldual \big[\tau({\secondarg}):\rho({\firstarg})\big] = \taubreg \big[\tau({\secondarg}):\tau({\firstarg})\big] . \nonumber
$}
\end{equation}
\normalsize

\begin{example}[\textrm{Forward and Reverse KL Divergence}]\label{example:kl}
Consider the following representations functions $\rho, \tau$, which are conjugate with respect to convex functions $f(\rho), f^*(\tau)$,
\begin{equation}
\begin{aligned}
       \rho(\tpi) &= \log \tpi \qquad   \\
       f(\rho) &= \exp\{\rho\}-\rho -1 \qquad 
\end{aligned}\qquad 
\begin{aligned}
      \tau(\tpi) &= \tpi   \qquad  \\
      f^*(\tau) &= \tau \log \tau - \tau + 1   \qquad  
\end{aligned}
\end{equation}
\normalsize
which leads to the following decomposable generators,  
\begin{align}
\begin{split}
     \potentialf\ofb{\rho_{\tpi}} &= 
     - \int \log \tpi(\vz) d\vz + \int \tpi(\vz) d\vz - 1  
    \\
     \potentialdual \ofb{\tau_{\tpi}}&= 
    \int \tpi(\vz) \log \tpi(\vz) d\vz - \int \tpi(\vz) d\vz + 1, 
    \end{split}
\end{align}
\normalsize
where $\potentialdual[\tau({\tpi})]=\potentialdual[{\tpi}]$ is the negative Shannon entropy.

Using these generators for the rho-tau Bregman divergence in \myeq{rhotau_tau}, we recover the \kl divergence with different ordering of the arguments,
\begin{align}
\rhobreg[\rho({\firstarg}):\rho(\secondarg)] &= \DKL[\secondarg:\firstarg] \qquad 
\taubreg[\tau({\firstarg}):\tau(\secondarg)] = \DKL[\firstarg:\secondarg] . \label{eq:kls}
\end{align}
where the \kl divergence between unnormalized densities is defined as
\begin{align}
\DKL[\tpi_{a} : \tpi_{b}] =  \int \tpi_{a}(\vx) \log \frac{\tpi_{a}(\vx)}{\tpi_{b}(\vx)} d\vx- \int \tpi_{a}(\vx)d\vx + \int \tpi_{b}(\vx)d\vx. 
\end{align}
\end{example}
Our notation in \cref{eq:rhotau_tau}-(\ref{eq:kls}) is suggestive, as treating the $\rho$-representation as input to the Bregman divergence will lead to our main result in \cref{thm:breg_info}.

\vheader
\section{Main Result}\label{sec:rho_tau_breg}

The well-known results of \citet{banerjee2005optimality, Banerjee2005} show that the family of Bregman divergences is characterized by the fact that the arithmetic mean over inputs minimizes the expected divergence to a representative barycenter in the second argument.
In this section, we use the rho-tau Bregman divergence framework to extend this result to quasi-arithmetic means (\mythm{breg_info}, \ref{thm:breg_info_vector}), which will clarify the variational interpretations of the annealing paths in \myeq{general_div_min} (see \mysec{examples}).
Following \citep{Banerjee2005}, we refer to the value of the expected divergence minimization as the rho-tau Bregman Information.   

In the case of two unnormalized probability densities $\tpi_0$ and $\tpi_1$ as input, the rho-tau Bregman Information matches the representational $\alpha$-divergence of \citet{zhang2004divergence, zhang2013nonparametric}, 
thereby associating a divergence functional for comparing $\tpi_0$ and $\tpi_1$ with each quasi-arithmetic mean along an annealing path.
Analyzing the statisical manifold structure induced by these divergences, we show in \mythm{geodesic} that the quasi-arithmetic mean under a monotonic embedding function $\rho$ traces a geodesic with respect to the primal affine connection induced by the Bregman divergence $\rhobreg[\rho({\firstarg}):\rho({\secondarg})]$ for \textit{any} choice of $\tau$ or $f$.

We begin by stating our main result, which shows that the quasi-arithmetic mixture of densities $\tpi_i$ minimizes the expected rho-tau Bregman divergence.  While this is similar in spirit to Prop. 1 of \citet{Banerjee2005} for the arithmetic mean and standard Bregman divergence, we prove a vector-valued version of our theorem in \myapp{thm_pf} \mythm{breg_info_vector} which is more directly analogous to \citep{Banerjee2005}.
We specialize to the case of two unnormalized densities $\tpi_0, \tpi_1$ as input in \cref{example:zhang_div}, which is the focus of our analysis of annealing paths in \mysec{examples}.

\begin{mybox}
\begin{restatable}[\textup{Rho-Tau Bregman Information}]{thm}{rtbreginfo}
\label{thm:breg_info}
Consider a monotonic representation function 
$\rho : \cX_\rho \subset \bbR \mapsto \cY_\rho\subset \bbR$
and a convex function $f : \cY_\rho \rightarrow \mathbb{R}$.     
Consider discrete mixture weights $\vwbeta = \{\beta_i \}_{i=1}^N$ over $N$ inputs $\bm{\pi}(x) = \{ \pi_i(x) \}_{i=1}^N$, $\pi_i(x) \in \cX_\rho$, with $\sum_i \beta_i = 1$.
Finally, assume the expected value 
$\mu_{\rho}(\bm{\pi}, \vwbeta) \coloneqq \sum_{i=1}^N \beta_i ~ \rho(\tpi_i(x))  \in \mathrm{ri}(\cY_\rho)$
is in the relative interior of the range of $\rho$ for all $x \in \cX$. 
Then, we have the following results,
\begin{itemize}
\item[(i)] For a given 
Bregman divergence $\rhobreg[\rho(\tpi_a):\rho(\tpi_b)]$ with generator $\potential_f$, the optimization 
\begin{align}
    \BregInfo{{f,\rho}} \of{{\bm{\tpi}, \vwbeta}} &\coloneqq \min \limits_\mu \sum \limits_{i=1}^N \, \beta_i  \, \rhobreg \left[ \rho(\tpi_i) : \rho(\mu)  \right] \, . \label{eq:breginfo_optimization}
\end{align}
{has a unique minimizer given by the quasi-arithmetic mean with representation function $\rho(\tpi)$}
\begin{align}
       \mu^*_{\rho}(\bm{\tpi}, \vwbeta) = 
       \rho^{-1}\left( \sum \limits_{i=1}^N \beta_i \, \rho\of{\tpi_i} \right) 
       = \argmin  \limits_{\mu} \sum \limits_{i=1}^N \, \beta_i  \, \rhobreg \left[ \rho(\tpi_i) : \rho(\mu)  \right] \label{eq:opt_mu} .
\end{align}
The arithmetic mean is recovered for $\rho(\tpi_i) = \tpi_i$ and any $f$ \citep{Banerjee2005}.
\item[(ii)] At this minimizing argument $\mu_\rho^*$, the value of the expected divergence (or right-hand side of \myeq{breginfo_optimization}) is called the  \textup{Rho-Tau Bregman Information} and is equal to a
gap in Jensen's inequality
for mixture weights $\vwbeta$, inputs $\rho(\vupi) = \{ \rho(\tpi_i) \}_{i=1}^N$, and the convex functional
$\potential_f[\rho_\tpi] = \int f\of{\rho_\tpi(\vz))} d\vz$,
\begin{align}
    \BregInfo{{f,\rho}} \of{{\vupi, \vwbeta}} = \sum \limits_{i=1}^N \beta_i \, \potentialf \ofb{\rho({\tpi_i})} 
    -  \potentialf \ofb{\rho(\mu^*_{\rho}) } 
    . 
    \label{eq:breginfo_jg_vector}
\end{align}
\item[(iii)] Using $\mu \neq \mu^*_{\rho}$
as the representative in \myeq{breginfo_optimization}, 
the suboptimality gap 
is a rho-tau Bregman divergence
\begin{align}
\rhobreg\big[
 \rho\of{\mu_\rho^*}  :  \rho\of{\mu}\big] = \sum \limits_{i=1}^N \beta_i ~ \rhobreg\left[ \rho\of{\tpi_i} : \rho\of{\mu}  \right] - \BregInfo{{f,\rho}}\of{\bm{\tpi}, \vwbeta} , \label{eq:bias_var}
\end{align}
where $\BregInfo{{f,\rho}}\of{\bm{\tpi}, \vwbeta}$ is evaluated at $\mu_\rho^*$ 
as in \cref{eq:breginfo_jg_vector}.
\end{itemize}
\end{restatable}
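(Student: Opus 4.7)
My plan is to prove parts (i)–(iii) together via a single ``bias–variance'' style identity, which is the quickest route and which specializes part (i) of \citet{Banerjee2005} to the $\rho$-representation. The key observation is that the rho-tau Bregman divergence in \cref{eq:rhotau_tau} is \emph{decomposable}: it is an integral over $x \in \cX$ of a pointwise Bregman divergence between scalars $\rho(\tpi_i(x)), \rho(\mu(x)) \in \cY_\rho$ generated by the convex function $f$. Since expectations, integrations and the sum over $i$ all commute, I can work pointwise in $x$ and treat the problem as a scalar Bregman barycenter problem with input $\{\rho(\tpi_i(x))\}_{i=1}^N$ and weights $\vwbeta$, then integrate the result over $\cX$.

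\emph{Step 1 (establish identity (iii)).} I would expand $\sum_i \beta_i \, D_f[\rho(\tpi_i):\rho(\mu)]$ using the definition in \cref{eq:rhotau_tau}, split the integral termwise, and pull the sum over $i$ inside. The crucial move is to write $\sum_i \beta_i \, \rho(\tpi_i(x)) = \rho(\mu^*_\rho(x))$ by the $\rho$-affine characterization of the quasi-arithmetic mean (\cref{eq:abstract_affine}). This identifies
\begin{align*}
\sum_{i=1}^N \beta_i \, \rhobreg\bigl[\rho(\tpi_i):\rho(\mu)\bigr]
&= \sum_{i=1}^N \beta_i \, \potentialf[\rho(\tpi_i)] - \potentialf[\rho(\mu)] \\
&\quad - \int \bigl(\rho(\mu^*_\rho(x)) - \rho(\mu(x))\bigr)\, f'\bigl(\rho(\mu(x))\bigr) \, dx,
\end{align*}
and the last two terms collapse to exactly $\potentialf[\rho(\mu^*_\rho)] - \potentialf[\rho(\mu)] - \int (\rho(\mu^*_\rho) - \rho(\mu)) f'(\rho(\mu))\, dx = \rhobreg[\rho(\mu^*_\rho):\rho(\mu)]$ after adding and subtracting $\potentialf[\rho(\mu^*_\rho)]$. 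Rearranging yields \cref{eq:bias_var}, modulo the value of $\BregInfo{f,\rho}(\bm\tpi,\vwbeta)$ which is pinned down in Step~3.

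\emph{Step 2 (deduce (i) from (iii)).} Non-negativity of the Bregman divergence gives $\rhobreg[\rho(\mu^*_\rho):\rho(\mu)] \geq 0$ in \cref{eq:bias_var}, so the objective is minimized when this term vanishes. Strict convexity of $f$ on $\mathrm{ri}(\cY_\rho)$ implies $\rhobreg[\rho(\mu^*_\rho):\rho(\mu)] = 0$ iff $\rho(\mu(x)) = \rho(\mu^*_\rho(x))$ for a.e.\ $x$; invertibility of $\rho$ then forces $\mu = \mu^*_\rho$ pointwise a.e., giving uniqueness. The assumption that $\sum_i \beta_i \rho(\tpi_i(x)) \in \mathrm{ri}(\cY_\rho)$ for all $x$ guarantees the minimizer is admissible, i.e.\ $\rho^{-1}$ is defined and $\mu^*_\rho$ lies in the feasible set.

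\emph{Step 3 (deduce (ii)).} Substituting $\mu = \mu^*_\rho$ into \cref{eq:bias_var} makes the left-hand side vanish, giving
\begin{align*}
\BregInfo{f,\rho}(\bm\tpi,\vwbeta) = \sum_{i=1}^N \beta_i \, \rhobreg\bigl[\rho(\tpi_i):\rho(\mu^*_\rho)\bigr].
\end{align*}
Expanding this and using $\sum_i \beta_i \rho(\tpi_i) = \rho(\mu^*_\rho)$ again makes the inner-product term $\int (\rho(\tpi_i) - \rho(\mu^*_\rho)) f'(\rho(\mu^*_\rho)) \, dx$ vanish after summation, leaving the Jensen gap in \cref{eq:breginfo_jg_vector}.

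\emph{Main obstacle.} The calculation itself is essentially a bookkeeping exercise; the only real care is in the assumption management. I would want to verify (a) that $\mu^*_\rho(x) = \rho^{-1}(\sum_i \beta_i \rho(\tpi_i(x)))$ is well-defined pointwise, which is exactly ensured by the hypothesis $\mu_\rho(\bm\pi,\vwbeta)(x) \in \mathrm{ri}(\cY_\rho)$, and (b) that the integrals defining $\potentialf[\rho(\mu^*_\rho)]$ and the cross-term are finite, which follows from the finiteness conditions on $\potentialf[\rho(\tpi_i)]$ and Jensen's inequality applied pointwise to the convex $f$. With these, every step above reduces to pointwise scalar Bregman identities plus integration, so no further analytic subtleties arise.
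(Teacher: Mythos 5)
Your proposal is correct and follows essentially the same route as the paper's proof: the same termwise expansion of the decomposable divergence, the same key cancellation via the $\rho$-affine identity $\sum_i \beta_i \rho(\tpi_i) = \rho(\mu^*_\rho)$, and the same add-and-subtract of $\potentialf[\rho(\mu^*_\rho)]$ to isolate the Jensen gap plus a residual Bregman divergence — you merely derive the bias--variance identity (iii) first and read off (i)--(ii) as corollaries, whereas the paper proves (ii), then (i), and then re-derives (iii) through a Fenchel conjugate expansion of $\potentialf[\rho(\mu^*_\rho)]$ to additionally obtain a variational upper-bound interpretation. The only substantive thing you omit is that extra conjugate-duality perspective, which is not needed for the identity itself.
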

\end{mybox}
See \myapp{thm_pf} for proof.     
We now consider the case of $N=2$ with unnormalized densities $\vupi = \{\tpi_0, \tpi_1 \}$ as input,
which is our main object of interest in \mysec{examples}.

\begin{restatable}
[\textrm{Rho-Tau Bregman Information and Divergences}]{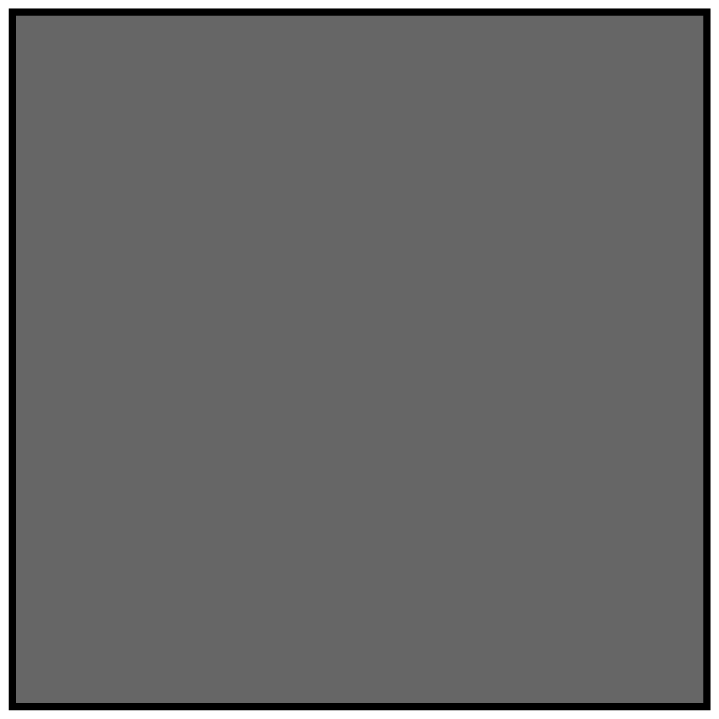}{divergence}
\label{example:zhang_div}
Consider the expected rho-tau Bregman divergence minimization for a decomposable generator $\potential_f[\rho(\tpi)]$, inputs $\vupi = \{\tpi_0, \tpi_1 \}$, and weights $\vwbeta = \{ 1-\beta, \beta \}$.   
\begin{align}
\begin{split}
\BregInfo{{f,\rho}} \of{{\bm{\tpi}, \vwbeta}} &\coloneqq \min \limits_{\tpi} ~ (1-\beta)  \rhobreg[\rho(\tpi_0): \rho(\tpi)] + \beta ~  \rhobreg[\rho(\tpi_1): \rho(\tpi)] \\
&=(1-\beta) ~ \potentialf \ofb{\rho\of{\tpi_0}} + \beta \, \potentialf \ofb{\rho\of{\tpi_1}} - \potentialf\ofb{\rho\of{{\tpi_{\beta}^{(\rho)}}}} 
\end{split}
\end{align}
where the optimizing argument $\mu^*_\rho$ is given by the quasi-arithmetic mean
\begin{align}
       \tpi^{(\rho)}_{\beta}(\vz) &\coloneqq 
       \rho^{-1}\Big( (1-\beta)\rho\of{{\tpi_0}(\vz)} + \beta~ \rho\of{{\tpi_1}(\vz)} \Big) . \label{eq:quasi_q}
\end{align}
Introducing scaling factors to induce limiting behavior for $\beta\rightarrow\{0,1\}$, the rho-tau Bregman Information matches the representational $\alpha$-divergence  $D_{f,\rho}^{(\beta)}[\tpi_0:\tpi_1]$ of \citet{zhang2004divergence, zhang2013nonparametric},
\begin{align}
   &D_{f,\rho}^{(\beta)}[\tpi_0:\tpi_1] \coloneqq \frac{1}{\beta(1-\beta)} \, 
   \BregInfo{{f,\rho}} \of{{\vupi, \vwbeta}} 
  \label{eq:scaled_breg} \\
  &\phantom{===} = \begin{cases} \frac{1}{\beta(1-\beta)} \Big( (1-\beta) \potentialf \ofb{\rho\of{\tpi_0}} + \beta \, \potentialf \ofb{\rho\of{\tpi_1}} - \potentialf\ofb{\rho\of{{\tpi_{\beta}^{(\rho)}}}} \Big) \hfill \quad \,\, (\beta \neq \, 1)  \\
      \rhobreg\,\,[\rho(\tpi_1):\rho(\tpi_0)] = \taubreg[\tau(\tpi_0):\tau(\tpi_1)] 
    \hfill \qquad \quad (\beta \rightarrow 1)\\
    \taubreg[\tau(\tpi_1):\tau(\tpi_0)] = \rhobreg\,\,[\rho(\tpi_0):\rho(\tpi_1)]
    \hfill \qquad \quad (\beta \rightarrow 0) 
    \end{cases} 
      \nonumber
\end{align}
where ${{\tpi_{\beta}^{(\rho)}}}$ is the quasi-arithmetic mean in \cref{eq:quasi_q}.
\myeq{scaled_breg} thus associates a divergence functional comparing $\tpi_0$ and $\tpi_1$ with each quasi-arithmetic mean with weight $\beta$ and representation function $\rho$. 

Following \citet{zhang2004divergence, zhang2013nonparametric}, we emphasize the `referential duality' in terms of the mixing parameter $\beta$, 
  where it is clear that, for example,
   $D_{f,\rho}^{(\beta)}[\tpi_0:\tpi_1]= D_{f,\rho}^{(1-\beta)}[\tpi_1:\tpi_0]$ .  `Representational duality' is expressed by the fact that the divergences in the $\rho$ and $\tau$ representations, respectively, are equivalent up to reordering of the arguments $\rhobreg[\rho(\firstarg):\rho(\secondarg)] = \taubreg[\tau(\secondarg):\tau(\firstarg)]$ \citep{zhang2004divergence, zhang2013nonparametric}.   
\end{restatable}

Using the well-known Eguchi relations \citep{eguchi1983second, eguchi1985differential}, \citet{zhang2004divergence, zhang2013nonparametric} analyze the statistical manifold structures $(\mathcal{M}, g, \nabla, \nabla^*)$ induced by the
family of $D_{f,\rho}^{(\beta)}$ divergences
in \myeq{scaled_breg} on the space of unnormalized 
density functions
$\mathcal{M}$ (see \myapp{info_geo}).   
We prove the following proposition in \myapp{qgeo}, which identifies the $\rho$- and $\tau$-representations as a pair of affine coordinate systems for the dually flat geometry induced by the Bregman divergence $\rhobreg[\rho(\firstarg):\rho(\secondarg)]$. 

\begin{mybox}
\begin{restatable}[\normalfont \textup{Geodesics for Rho-Tau Bregman Divergence}]{thm}{rhotau}
\label{thm:geodesic} 
 The curve $\gamma_t = \rho^{-1} \big( (1-t) \rho(\tpi_0) + t \, \rho(\tpi_1) \big)$ (with time derivative $\dot{\gamma_t} = \frac{d}{dt}\gamma_t$) is auto-parallel with respect to the primal affine connection $\nabla^{(1)}$ induced by the rho-tau Bregman divergence $\rhobreg[\rho(\firstarg):\rho(\secondarg)]$ for $\beta = 1$.   In other words, the following geodesic equation holds
 \begin{align}
    \nabla^{(1)}_{\dot{\gamma}_t} \dot{\gamma}_t  
     = 0. \label{eq:geodesic_eq}
\end{align}
The $\rho$-representation of the unnormalized density thus provides an affine coordinate system for the geometry induced by $\rhobreg[\rho(\firstarg):\rho(\secondarg)]$. 
\end{restatable}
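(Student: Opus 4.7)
The plan is to derive the primal affine connection $\nabla^{(1)}$ via Eguchi's construction from the rho-tau Bregman divergence $\rhobreg[\rho(\firstarg):\rho(\secondarg)]$ and verify that its Christoffel symbols vanish identically in the $\rho$-representation. An autoparallel curve for a connection whose symbols are zero in a given chart is just an affine line in that chart, and by construction the stated curve satisfies $\rho(\gamma_t) = (1-t)\rho(\tpi_0) + t\,\rho(\tpi_1)$, which is affine in $t$; hence $\gamma_t$ is automatically the $\nabla^{(1)}$-geodesic joining $\tpi_0$ to $\tpi_1$.

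Concretely, I would coordinatize the manifold $\mathcal{M}$ of unnormalized densities by the assignment $\tpi \mapsto \rho(\tpi(\cdot))$, treating the pointwise values as a (nonparametric) coordinate system. By decomposability,
\begin{equation*}
\rhobreg[\rho(\firstarg):\rho(\secondarg)] = \int L\big(\rho_a(x),\,\rho_b(x)\big)\, dx, \qquad L(u,v) = f(u) - f(v) - (u-v)\, f'(v),
\end{equation*}
so the Eguchi computation localizes to the scalar Bregman density $L$. The relevant partial derivatives are $\partial_u L = f'(u) - f'(v)$ and $\partial_u^2 L = f''(u)$, both independent of $v$; hence $\partial_v \partial_u^2 L \equiv 0$. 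The primal Christoffel symbols are (up to sign and ordering conventions) the mixed third derivatives $\partial_u \partial_u \partial_v L$ evaluated on the diagonal $u = v$, so they vanish identically and the $\rho$-chart is $\nabla^{(1)}$-affine. As a sanity check the metric is recovered as $-\partial_u \partial_v L\big|_{u=v} = f''(u) > 0$ by strict convexity of $f$, so the induced structure is nondegenerate.

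With $\Gamma^{(1)} \equiv 0$ in the $\rho$-chart, the geodesic equation \cref{eq:geodesic_eq} collapses to $\tfrac{d^2}{dt^2}\rho(\gamma_t) = 0$, which is manifestly satisfied by the curve in the statement; the dual claim, that $\tau$-coordinates are affine for $\nabla^{(-1)}$, follows by the same argument applied to $\taubreg[\tau(\firstarg):\tau(\secondarg)]$. The main obstacle I anticipate is making the infinite-dimensional Eguchi computation rigorous: the partial derivatives above are really functional derivatives on a manifold of density functions, so one must either justify the pointwise calculation directly via decomposability -- interpreting connection symbols as kernels in $x$ -- or restrict to finite-dimensional parametric submanifolds of $\mathcal{M}$ and pass to the limit. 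A minor bookkeeping point is that, per \cref{example:zhang_div}, $D_{f,\rho}^{(\beta)}$ at $\beta = 1$ is the Bregman divergence with reversed argument order; this only affects which endpoint acts as the reference point in the Eguchi derivatives and does not change the conclusion that $\rho$ is a $\nabla^{(1)}$-affine chart.
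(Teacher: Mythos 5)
Your proposal is correct, but it takes a genuinely different route from the paper. The paper's proof works in the ``identity'' chart: it quotes the explicit formula for the $\alpha$-connection applied to a curve, $\nabla^{(\alpha)}_{\dot{\gamma}}\dot{\gamma} = d_{\dot{\gamma}}\dot{\gamma} + \frac{d}{d\gamma}\big(\alpha\log\rho^{\prime}(\gamma)+(1-\alpha)\log\tau^{\prime}(\gamma)\big)\cdot\dot{\gamma}^2$ (\myeq{affine_alpha}), sets $\alpha=1$, computes $\dot{\gamma}_t = \big(\rho^{\prime}(\gamma_t)\big)^{-1}\big(\rho(\tpi_1)-\rho(\tpi_0)\big)$ and $d_{\dot{\gamma}_t}\dot{\gamma}_t$ by the chain rule, and verifies that the two terms cancel. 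You instead change coordinates to the $\rho$-chart and show via the Eguchi third-derivative formula (\myeq{gamma_param}) that the primal Christoffel symbols of the decomposable divergence $\int L(\rho_a,\rho_b)\,dx$, $L(u,v)=f(u)-f(v)-(u-v)f^{\prime}(v)$, vanish identically because $\partial_u^2 L = f^{\prime\prime}(u)$ is independent of $v$; autoparallel curves are then affine lines in the $\rho$-chart, which the stated curve is by construction. Your argument is more conceptual and also delivers the theorem's final sentence (that $\rho$ is an affine coordinate system) as the primary object rather than a corollary, whereas the paper's computation buys a self-contained verification that does not require re-deriving the connection from the divergence. Two caveats: first, your identification of the Eguchi-primal connection of $\rhobreg[\rho(\firstarg):\rho(\secondarg)]$ with the paper's $\nabla^{(1)}$ needs the paper's convention (\myeq{rt_affine_connection} at $\beta=1$, or \myeq{affine_alpha} at $\alpha=1$) to close the loop --- your parenthetical remark that the argument reversal at $\beta=1$ ``does not change the conclusion'' is a little quick, since literally applying the primal Eguchi formula to the reversed divergence $\rhobreg[\rho(\tpi_1):\rho(\tpi_0)]$ as a function of $(\tpi_0,\tpi_1)$ would produce the \emph{dual} connection with symbols $-f^{\prime\prime\prime}$; what saves you is that the theorem statement (and the paper's $\nabla^{(1)}$) refers to the primal connection of $\rhobreg$ in its standard argument order. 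Second, the rigor concern you raise about the infinite-dimensional Eguchi computation is not resolved by the paper either --- it works formally with the pointwise expression from \citet{zhang2013nonparametric} and defers well-definedness of the manifold to a footnote --- so you are not at a disadvantage there.
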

\end{mybox}

\newcommand{\mymu}{\tpi}
\newcommand{\textsizerb}{\fontsize{9pt}{9pt}\selectfont}
\begin{table}[t]
\vspace*{-3pt}
\centering
 \label{eq:jensen_gap_table}
  \resizebox{\textwidth}{!}{
\begin{tabular}{lccc}\toprule
   \multicolumn{4}{c}{\textsizerb Familiar Divergences from scaled Bregman Information 
   } \\[2.1ex]
     \multicolumn{4}{c}{\qquad \qquad \text{\textsizerb $D_{f,\rho}^{(\beta)}[\tpi_0:\tpi_1] = \dfrac{1}{\beta(1-\beta)} \BregInfo{f,\rho}[\{\tpi_0,\tpi_1\}, \beta] \qquad \qquad \qquad \qquad 
     $}} \\[1.5ex]  \multicolumn{4}{c}{  \quad~ \qquad \qquad \qquad \qquad\qquad \qquad\text{\textsizerb  $= \dfrac{1}{\beta (1-\beta )} \Big( (1-\beta ) \, 
    \Z_q(0) + \beta  \, \Z_q(1) - \Z_q(\beta )  \Big) \geq 0. $} \qquad  }\\[5ex]
    \multicolumn{2}{c}{\text{\textsizerb and Quasi-Arithmetic Means:}} &  \multicolumn{2}{l}{ \text{\textsizerb inputs: $\{ \, \tpi_0, \,  \tpi_1 \,  \}$}}\\[1.55ex]
    \multicolumn{2}{c}{\multirow{2}{*}{\text{\textsizerb $\tpi^{(q)}_{\beta }(\vz)= \rho_q^{-1}\Big( (1-\beta) \rho_q\of{\tpi_0(\vz)} + \beta \, \rho_q\of{\tpi_1(\vz)} \Big)$}}} &  \multicolumn{2}{l}{  \text{\textsizerb weights: $\{ 1-\beta, \beta \}$ } } \\[1.55ex]
    \multicolumn{2}{c}{} & \multicolumn{2}{l}{\text{\textsizerb representation: $\rho_q(\tpi) = \log_q \tpi$}}
    \\[1.5ex] \midrule
    & \multicolumn{2}{c}{Bregman Divergence} & 
    \text{ Breg. Info.}
    \\[0.5ex]
   Convex Function 
   & $\beta  \rightarrow 0$ & $\beta  \rightarrow 1$ & \footnotesize $\beta \not \in \{0,1\}$\\ \cmidrule[\heavyrulewidth]{1-4} 
  \text{\scriptsize $\log \Z_1(\beta) = \log \int \mymu^{(\text{geo})}_{\beta}(\vz) d\vx $} 
  &  $D_{\KL}[\pi_0 : \pi_1]$&  $D_{\KL}[\pi_1 : \pi_0]$ & $D_R^{(\beta)}[\pi_0 : \pi_1]$ \\[2.5ex]
   $\Z_1(\beta)
   =\int \mymu^{(\text{geo})}_{\beta}(\vz) d\vx$ & $D_{\KL}[\tpi_0 : \tpi_1]$ & $D_{\KL}[\tpi_1 : \tpi_0]$ & $D_A^{(\beta)}[\tpi_0 : \tpi_1]$ \\[2.5ex]
    $\Z_q(\beta)
    = \int \mymu^{(q)}_{\beta}(\vz) d\vx$ 
    & $D_A^{(q)}[\tpi_0 : \tpi_1]$ &  $D_A^{(q)}[\tpi_1 : \tpi_0]$ & $ D_Z^{(\beta, q)}[\tpi_0 : \tpi_1]$ \\[2.5ex]
   $\Z_0(\beta)
   = \int \mymu^{(\text{arith})}_{\beta}(\vz) d\vx$ & $D_{\KL}[\tpi_1 : \tpi_0]$ & $D_{\KL}[\tpi_0 : \tpi_1]$ & $D_{\mathrm{JS}}^{(\beta)}[\tpi_0 : \tpi_1]$ \\[2.75ex] 
   $\potential_f\ofb{\rho(\tpi^{(q)}_\beta)} = \int \mymu^{(q)}_{\beta}(\vz)^{2-q} d\vx$ & $D_{B}^{(2-q)}[\tpi_1:\tpi_0] $  &  $D_{B}^{(2-q)}[\tpi_0:\tpi_1] $ & see  Ex. \ref{example:beta_bi}  
   \\[2.5ex] 
      {\text{\small $\potential_f\ofb{\rho(\tpi^{(q)}_\beta)} = \int \mymu^{(q)}_{\beta}(\vz)^{\lambda+1-q} d\vx$ }} & $D_{C}^{(q, \lambda)}[\tpi_1:\tpi_0] $  &  $D_{C}^{(q, \lambda)}[\tpi_0:\tpi_1] $ & 
      see \myeq{breg_info_cichocki}  \\
   \bottomrule
\end{tabular}
}
\caption{ Bregman Information and Divergence Functionals (see examples throughout \mysec{examples}).
For a convex generator $\Psi_f[\rho]$ (\mysec{rhotau}) or $\Z_q(\beta)$ (\mysec{parametric}), the Bregman Information is a gap in Jensen's inequality (top box) obtained by minimizing the expected Bregman divergence as in \myeq{general_div_min}.
The scaled Bregman Information $\frac{1}{\beta(1-\beta)} \BregInfo{f,\rho}$ matches the representational $\alpha$-divergence $D_{f,\rho}^{(\beta)}$ from \cite{zhang2004divergence} (see \cref{example:zhang_div}), and encompasses all examples in the lower box.
Note that $D_R^{(\alpha)}$ is R\'enyi's $\alpha$-divergence, $D_A^{(\alpha)}$ 
is Amari's $\alpha$-divergence, $D_{\mathrm{JS}}^{(\alpha)}$ is the Jensen-Shannon divergence with mixture weight $\alpha$,  $D_Z^{(\beta, q)}$ is the $(\alpha,\beta)$ divergence of \citet{zhang2004divergence}, $D_{B}^{(q)}$ is the Beta-divergence of order $q$ \citep{basu1998robust}, and $D_{C}^{(\lambda, q)}$ is the $(\alpha,\beta)$ divergence of \citet{cichocki2010families}.} \label{tab:jensen_gaps} 
\vspace*{-15pt}
\end{table}

\vheader
\vheader
\section{Annealing Paths and Divergence Minimization}\label{sec:examples}
Using the rho-tau Bregman Information framework from the previous section, we are now able to understand and extend the variational representations of \gls{MCMC} paths from \citet{grosse2013annealing, masrani2021q}.   
\begin{itemize}
    \item Following \mythm{breg_info}, we derive $q$-annealing paths \citep{masrani2021q} from an expected divergence minimization for \textit{any} rho-tau Bregman divergence with $\rho(\tpi) = \log_q \tpi$ in \mysec{cichocki}.   
    For different choices of convex function $f$, we recover the Amari $\alpha$-divergence minimization interpretation of $q$-paths from \citet{masrani2021q} or \cref{eq:general_div_min}, or novel interpretations via minimization the $\beta$-divergence or the ($\alpha,\beta$)-divergence from \citet{cichocki2010families}\cite{cichocki2011generalized}.
    \item In the above interpretation, the Amari $\alpha$-divergence corresponds to a rho-tau Bregman divergence for $\alpha = q$ and $\beta=1$.  In \cref{example:geo},  we further show that the $\alpha$-divergence can arise from the choice of mixture parameter $\beta$.  In particular, under geometric averaging with $q=1$ and $\rho(\tpi) = \log \tpi$, the Amari $\alpha$-divergence corresponds to the rho-tau Bregman Information with order  $\alpha = \beta$.
    \item In \mysec{parametric}, we provide a parametric interpretation of the rho-tau Bregman Information, where the mixing weight $\beta$ is the natural parameter for a deformed $q$-exponential family constructed from arbitrary endpoint densities. 
\end{itemize}
We summarize our examples in \cref{tab:jensen_gaps}, and refer the reader to \citet{masrani2021q} for an empirical study of $q$-paths for \gls{MCMC} applications in Bayesian inference and marginal likelihood evaluation.   

We begin by providing two common examples of the rho-tau Bregman Information for the arithmetic and geometric mixture paths.  First, recall the choices leading to the forward and reverse \kl divergences in \cref{example:kl},
\begin{equation}
\begin{aligned}
       \rho(\tpi) &= \log \tpi \qquad   \\
       f(\rho) &= \exp\{\rho\}-\rho -1 \qquad 
\end{aligned}\qquad 
\begin{aligned}
      \tau(\tpi) &= \tpi   \qquad  \\
      f^*(\tau) &= \tau \log \tau - \tau + 1   \qquad  
\end{aligned}
\end{equation}
where the dual decomposable generator $\potentialdual \ofb{\tau({\tpi})}$ is a functional of the density directly, and matches the negative Shannon entropy
\begin{align} 
 \potentialdual \ofb{{\tpi}} &= 
    \int \tpi(\vz) \log \tpi(\vz) d\vz - \int \tpi(\vz) d\vz + 1.\label{eq:shannon}
\end{align}

\begin{example}[\textrm{Jensen-Shannon Divergence as Bregman Information for the Mixture Path}]\label{example:jsd}
The (weighted) \gls{JSD} \citep{burbearao1982, lin1991divergence} is the most natural example of a Bregman Information, with $ \DJSb\big[\tpizero : \tpi_1 \big] =  \BregInfo{{f^*,\tau}} \of{{\vupi, \vwbeta}} $ corresponding to the forward \kl divergence in \cref{example:kl}.   First, recall the definition
\begin{align}
\DJSb\big[\tpizero : \tpi_1 \big] 
    \coloneqq  
  \int (1-\beta) \tpizero(\vz) \log \frac{\tpizero(\vz)}{\tpimix(\vz)}d\vz + \beta \, \tpi_1(\vz) \log \frac{\tpi_1(\vz)}{\tpimix(\vz)} d\vz. \nonumber
\end{align}
The arithmetic mixture $\tpimix(\vz) \coloneqq (1-\beta) \tpizero(\vz) + \beta \tpi_1(\vz)$ minimizes the expected divergence in the $\tau(\tpi)=\tpi$ representation, which yields   
\begin{align}
   \BregInfo{{f^*, \tau}} \of{{\vupi, \vwbeta}}    
   &= \min \limits_{\tpi}~ (1-\beta) \taubreg[\tau(\tpizero) : \tau(\tpimix) \big] + \beta  \taubreg[  \tau(\tpi_1) : \tau(\tpimix) \big] \nonumber \\
    &= (1-\beta) \DKL \big[ \tpizero : \tpimix \big] + \beta  \DKL[  \tpi_1 : \tpimix \big]  \nonumber \\ 
    &=(1-\beta)\potentialdual[\tpi_0]  + \beta \potentialdual[\tpi_1]- \potentialdual[\tpimix] , \nonumber
\end{align}
where $\potentialdual[\tpi]$ is the negative Shannon entropy as in \cref{eq:shannon}.
\end{example}

\begin{example}[\textrm{Amari $\alpha$-Divergence as Bregman Information for the Geometric Path}]\label{example:geo}
When minimizing the reverse \kl divergence $\rhobreg[\rho({\firstarg}):\rho(\secondarg)] = \DKL[\secondarg:\firstarg]$, note that optimizing over the second argument of the rho-tau Bregman divergence corresponds to optimizing over the first argument of the \kl divergence.  
The geometric mixture, or quasi-arithmetic mean for $\rho(\tpi) = \log \tpi$, given by
\begin{align} 
   \hspace*{-.35cm} \tpigeo(\vz) &\coloneqq \tpizero(\vz)^{1-\beta} \tpi_1(\vz)^{\beta} \\
   &= \argmin \limits_{\tpi} \, (1-\beta) \DKL[ \tpi : \tpizero] + \beta \,  \DKL[ \tpi : \tpi_1] \, , \nonumber 
\end{align}
provides the minimizing argument as in \citet{grosse2013annealing}.   After simplifying, the scaled Bregman Information recovers the Amari $\alpha$-divergence in \cref{eq:amari_alpha},
\small
\begin{align}
   \hspace*{-.1cm} 
& \frac{1}{\beta(1-\beta)} \BregInfo{{f, \rho}} \of{{\vupi, \vwbeta}} \label{eq:amari_breg}\\
&\phantom{==}= \frac{1}{\beta(1-\beta)} \bigg( (1-\beta) \DKL[ \tpigeo: \tpizero] + \beta \,  \DKL[ \tpigeo : \tpi_1] \bigg) \nonumber \\
  &\phantom{==}=  \frac{1}{\beta(1-\beta)} \bigg( (1-\beta)  \potentialf\ofb{\rho_{\tpi_0}} + \beta \potentialf\ofb{\rho_{\tpi_1}} -  \potentialf\ofb{\rho_{\tpi_\beta^{(\text{geo})}}} \bigg) \nonumber \\ 
   &\phantom{==} = \frac{1}{\beta(1-\beta)} \bigg( (1-\beta) \int \tpi_0(\vz) d\vz + \beta \int \tpi_1(\vz) d\vz - \int 
    \tpi_0(\vz)^{1-\beta} \tpi_1(\vz)^{\beta}  
    d\vz \bigg) \nonumber \\
    &\phantom{==}= D_A^{(\beta)}[\tpi_0 : \tpi_1].  \nonumber
\end{align}
\normalsize
Note that the order of the $\alpha$-divergence is set by the mixture parameter $\beta$, which is analogous to an inverse temperature parameter in maximum entropy or lossy compression applications  \citep{jaynes1957information, tishby1999information, bercher2012simple, alemi2018fixing}.
\end{example}

\vheader
\robsubsection{{$q$-Paths from a Divergence Minimization Perspective}}\label{sec:cichocki}
\vheader
Following \mythm{breg_info}-\ref{thm:geodesic} and the examples above, the $q$-annealing paths from \citet{masrani2021q}, which correspond to a quasi-arithmetic mean of two endpoint unnormalized densities $\bm{\tpi}= \{\tpi_0, \tpi_1 \}$ with weight $\beta$, should arise from \textit{any} rho-tau Bregman divergence in the $\rho_q(\tpi) = \log_q \tpi$ representation,
\begin{align}
\begin{split}
 \qpath(\vz) &= \exp_q \bracket{(1-\beta) \log_q \tpi_0(\vz) + \beta \log_q \tpi_1(\vz) } 
    \label{eq:qpath_minimization}, \\
    &= \argmin \limits_{\tpi} (1-\beta) \rhobreg\big[\rho({\tpi_0}):\rho({\tpi})\big] + \beta \rhobreg\big[\rho({\tpi_1}):\rho({\tpi})\big] 
    \end{split}
\end{align}  
The choice of convex function $f$ or dual representation $\tau$ is an additional degree of freedom in specifying the divergence, suggesting that the $q$-annealing path can be viewed as the Bregman barycenter for a wide range of divergences.

In \mysec{rt_breg}, we describe a family of rho-tau Bregman divergences corresponding to the $(\alpha,\beta)$ divergence of \citet{cichocki2010families}\cite{cichocki2011generalized}, which includes the Amari $\alpha$-divergence and Beta divergence as special cases.   We derive the corresponding rho-tau Bregman Informations and divergence minimization interpretations of $q$-paths in \mysec{rt_bi}.

\robsubsubsection{{Rho-Tau Bregman Divergence}}\label{sec:rt_breg}
Consider the family of dual representations $\tau(\tpi)$ defined by another deformed $q$-logarithm of order $\lambda$,
\begin{align}
\rho(\tpi) = \log_q(\tpi) = \frac{1}{1-q}\tpi(\vz)^{1-q}-\frac{1}{1-q} \qquad \tau(\tpi) = \log_{1-\lambda}(\tpi) = \frac{1}{\lambda} \tpi(\vz)^{\lambda} - \frac{1}{\lambda} . 
\nonumber
\end{align}
\normalsize
As in \mysec{rhotau}, the above $\rho$ and $\tau$ representations will be conjugate with respect to the convex function for which $f^{\prime} = \tau \circ \rho^{-1}$ (see \myeq{conj_condition}).  Choosing an additive constant to induce limiting behavior as $q \rightarrow 1$ or $\lambda \rightarrow 0$ (see \myapp{info_geo} \cref{tab:limiting_kl}),
we write the convex function $f$ and decomposable generator $\potentialf[\rho({\tpi})]$ as 
\small
\begin{align}
    f(\rho) &= \frac{1}{\lambda}\frac{1}{\lambda + 1-q} \exp_q\{ \rho \}^{{\lambda+1-q}
    } - \frac{1}{\lambda} \rho - \frac{1}{\lambda(\lambda+1-q)}. \label{eq:alpha_beta_potential}\\[1.25ex]
    \potentialf[\rho_{\tpi}] 
    &= \frac{1}{\lambda}\frac{1}{\lambda + 1-q} \int \tpi(\vx)^{\lambda+1-q} d\vx - \frac{1}{\lambda}\frac{1}{1-q}\int \tpi(\vx)^{1-q} d\vx - \frac{1}{\lambda(\lambda+1-q)}. \nonumber    
\end{align}
\normalsize
Note that the Bregman divergence generated by $f(\rho)$ or $\Psi_f[\rho]$ are invariant to the second and third terms, which are affine in $\rho$ \citep{Banerjee2005}.

Finally, the rho-tau Bregman divergence $\rhobreg[\rho(\firstarg):\rho(\secondarg)]$ derived from \cref{eq:alpha_beta_potential} matches the $(\alpha,\beta)$ divergence of \citet{cichocki2010families}\cite{cichocki2011generalized},
\begin{align}
 \medmath{ D_C^{(q, \lambda)}[\firstarg: \secondarg] \coloneqq \frac{1}{\lambda(1-q) (\lambda+1-q)} \Big( }&\medmath{(1-q) \int \firstarg(\vz)^{\lambda+1-q} d\vx + \lambda \int \secondarg(\vz)^{\lambda+1-q} d\vx } \nonumber \\
&\phantom{===} \medmath{- (\lambda+1-q) \int \firstarg(\vz)^{\lambda}\secondarg(\vz)^{1-q}  d\vx  \Big). } \label{eq:cichocki-amari}
\end{align}
\normalsize

\begin{example}[Amari $\alpha$-Divergence as a Rho-Tau Bregman Divergence]\label{example:alpha}   Choosing $\lambda = q$, the dual representation becomes the $1-q$ logarithm $\tau(\tpi) = \log_{1-q}(\tpi) =  \frac{1}{q}\tpi(\vz)^{q} - \frac{1}{q}$, which is known as the $\tau$-deformed gauge in \citet{naudts2018rho}.   Using the generator in \cref{eq:alpha_beta_potential}, the Bregman divergence $\rhobreg\big[\rho({\firstarg}):\rho({\secondarg})\big]$ is the Amari $\alpha$-divergence
\begin{align}
 D_{A}^{(q)}[\firstarg:\secondarg] 
  &=\frac{1}{q} \int \firstarg(\vz) d\vz + \frac{1}{1-q} \int \secondarg(\vz) d\vz - \frac{1}{q}\frac{1}{1-q} \int \firstarg(\vz)^{1-q}\secondarg(\vz)^{q} d\vz \nonumber 
\end{align}
Note that, in contrast to \cref{example:geo}, the Amari $\alpha$-divergence is a rho-tau Bregman divergence (instead of a Bregman Information), with the order set by the representation parameter $q$ (instead of the mixing weight $\beta$).  
    
\end{example}

\begin{example}[Beta-Divergence as a Rho-Tau Bregman Divergence] \label{example:beta}
Choosing $\lambda = 1$ and modifying the dual representation to be $\tau(\tpi) = \tpi$, we have the following choices of $f$ and $f^*$
\begin{align}
          \rho_{\tpi}(\vz) &=  \log_{q} \tpi(\vz)  \qquad \qquad  \tau_{\tpi}(\vz) =  \tpi(\vz)
     \label{eq:beta_rho}. \\
    {f(\rho)} &=  {\frac{1}{2-q} \left[ 1 + (1-q) \rho \right]_{+}^{\frac{2-q}{1-q}} -\frac{1}{2-q} \quad \,\,\, = \log_{q-1} \left( \exp_{q}\{\rho\} \right)}  \qquad  \nonumber
\end{align}
which is known as the $\tau$-identity gauge in \citep{naudts2018rho, zhang2021entropy}.
The associated rho-tau Bregman divergence is the Beta-divergence (\citep{basu1998robust, murata2004information,eguchi2006information}, \citep{naudts2011generalised} Ch. 8) matches \cref{eq:cichocki-amari} for $\lambda = 1$,
\begin{align}
\medmath{ D_{B}^{2-q}[\secondarg:\firstarg] }
        & 
        \medmath
        {\coloneqq 
        \frac{1}{1-q}\frac{1}{2-q} \int  \secondarg(\vz)^{2-q} d\vz + \frac{1}{2-q}\int  \firstarg(\vz)^{2-q}d\vz  - \frac{1}{1-q} \int  \secondarg(\vz) \firstarg(\vz)^{1-q} d\vz 
        } \nonumber . \\
        &=\rhobreg[ \rho({\firstarg}):\rho({\secondarg})]\label{eq:beta}
\end{align}
where we note that the order of the arguments is reversed in $D_B$ compared to $\rhobreg$.
We emphasize that the representation parameter $q$ sets the order of the Beta divergence, rather than the mixing parameter $\beta$. 

For the $\tau$-identity case, we call attention to the form of the dual generator
\begin{equation}
\begin{aligned}
    \medmath{f^*(\tau)} &=\frac{1}{1-q}\frac{1}{2-q} \tau^{2-q}-\frac{1}{1-q} \tau + \frac{1}{2-q} \,\, 
    \\
    \potential^*_{f^*}[\tpi] &= 
    \frac{1}{(1-q)(2-q)} \int \tpi(\vz)^{2-q} d\vz - \int \frac{1}{1-q}\tpi(\vz) d\vz + \frac{1}{2-q}.  \nonumber
\end{aligned} 
\end{equation}
In particular, since $\tau(\tpi) = \tpi$, we can directly interpret the dual generator $\potential^*_{f^*}[\tpi] $ as the negative Tsallis entropy functional of order $2-q$ \citep{tsallis1988possible, tsallis2009introduction, naudts2011generalised}.   The Beta divergence thus corresponds to the Bregman divergence generated by the negative Tsallis entropy (\citep{naudts2011generalised} Sec. 8.7, compare with \cref{example:kl}).
\footnote{The identity dual representation leads to favorable properties for divergence minimization under linear constraints. \citet{csiszar1991least} characterize Beta divergences as providing scale-invariant projection onto the set of positive measures satisfying expectation constraints.  \citet{naudts2011generalised} Ch. 8 discuss related thermodynamic interpretations.   Here, the Beta divergence is preferred in place of the $\alpha$-divergence, which induces \textit{escort} expectations due to the deformed dual representation $\tau(\tpi) = \log_{1-q}\tpi= \frac{1}{q}\tpi(\vz)^{q} - \frac{1}{q}$ from \cref{example:alpha} \citep{naudts2011generalised, zhang2021entropy}.}
\end{example}

\robsubsubsection{{Rho-Tau Bregman Information}}\label{sec:rt_bi}
Using \mythm{breg_info} with $\rho(\tpi) = \log_q(\tpi)$, we conclude that the $q$-annealing path minimizes the expected Cichocki-Amari divergence for any choice of $(q,\lambda)$,
\begin{align}
    \qpath(\vz) 
    &= \argmin \limits_{\tpi} \, (1-\beta) D_C^{(q, \lambda)}[\tpi_0 : \tilde{\pi}] + \beta D_C^{(q, \lambda)}[\tpi_1 : \tilde{\pi}] ,
\end{align}

Interpreting the Bregman Information as a gap in Jensen's inequality for the generator $\potentialf[\rho_{\tpi}]$ of the Cichocki-Amari divergence in \cref{eq:alpha_beta_potential}, we have
\begin{align}
   \hspace*{-.2cm} 
   \BregInfo{f,\rho} \of{\vupi, \vwbeta}& = 
   (1-\beta) \potentialf \ofb{\rho\of{\tpi_0}} + \beta \, \potentialf \ofb{\rho\of{\tpi_1}} - \potentialf\ofb{\rho\of{\tpi_\beta^{(q)}}} \label{eq:breg_info_cichocki}
   \\
   &=
   \frac{1}{\lambda (\lambda+1-q)}\Big(  (1-\beta) \int \tpi_0(\vx)^{\lambda+1-q} d\vx   \, + \, \beta \int \tpi_1(\vx)^{\lambda+1-q} d\vx \nonumber  \\
    &\phantom{=\beta \frac{1}{\lambda (\lambda+1-q)}\Big(  (1-\beta) \int \tpi_0(\vx)^{\lambda+1-q} d\vx   \, } -  \int \qpath(\vx)^{\lambda+1-q} d\vx \nonumber
    \Big), 
\end{align}
\normalsize
The rho-tau Bregman Information thus reduces to a gap in Jensen's inequality for a functional $\Psi_f: \rho(\tpi) \rightarrow \bbR$ which, expressed directly in terms of $\tpi$, integrates each unnormalized density raised to the $\lambda +1 - q$ power. 
An analogous Bregman Information can be derived for the dual Bregman divergence in the $\tau = \log_{1-\lambda}$-representation (see examples below).

As in \cref{example:zhang_div}, scaling the Bregman Information in \myeq{breg_info_cichocki} by $\frac{1}{\beta(1-\beta)}$ suggests a further family of divergence functionals $D_{f,\rho}^{(\beta)}[\tpi_0:\tpi_1]$ for comparing $\tpi_0$ and $\tpi_1$.   We summarize the Eguchi relations for these divergences in \myapp{info_geo}.

\begin{example}[Bregman Information induced by Amari $\alpha$-Divergence]\label{example:alpha_bi}
For $\lambda = q$ and the Amari $\alpha$-divergence from \cref{example:alpha}, we recover the divergence minimization interpretation from \cref{eq:qpath_minimization} and \citet{masrani2021q}
\begin{align}
\qpath(\vz) = \argmin \limits_{\tpi} (1-\beta) D_A^{(q)}\big[\tpi_0: \tpi \big] + \beta \, D_A^{(q)}\big[\tpi_1: \tpi \big] 
\end{align}
\normalsize
The $(\alpha,\beta)$ divergence from \citet{zhang2004divergence, zhang2013nonparametric} arises as the corresponding scaled Bregman Information, where we rename the $(\alpha,\beta)$ divergence as $D_Z^{(\beta,q)}[\tpi_0 : \tpi_1]$ to clarify the role of each parameter.
\begin{align}
  D_Z^{(\beta,q)}[\tpi_0 : \tpi_1] &:= 
    \frac{1}{\beta(1-\beta)}\frac{1}{q} \left( \int  (1-\beta) \, \tpi_0(\vz)  + \beta \, \tpi_1(\vz)-  
    \qpath(\vz) \, 
    d\vz \right)   \label{eq:qpath_normalizer} \\
    &\phantom{:}=  \frac{1}{\beta(1-\beta)} \BregInfo{f,\rho} \of{\vupi, \vwbeta} . \nonumber
\end{align}
Since $\lambda + 1 - q = 1$ for $\lambda=q$, the divergence measures the difference between a mixture of normalization constants and the normalization constant of a quasi-arithmetic mean.
Note that $D_Z^{(\beta,q)}$ is also an $f$-divergence using  $f^{(\beta, q)}(u) = \frac{1}{\beta(1-\beta)}\frac{1}{q} \big( (1-\beta) + \beta u - \big( (1-\beta) + \beta u^{1-q} \big)^{\frac{1}{1-q}} \big)$, and is shown by \citet{zhang2004divergence} to be the unique family of measure-invariant divergences satisfying the homogeneity condition $D[c \tpi_a : c \tpi_b] = c D[ \tpi_a :  \tpi_b]$ (see \citep{hardy1953} pg. 68, \citep{amari2007integration}).   

Interpretations for the dual representation $\tau(\tpi) = \log_{1-q}(\tpi)$ are analogous, with $\frac{1}{\beta(1-\beta)} \BregInfo{f^*,\tau} \of{\vupi, \vwbeta} = D_Z^{(\beta,1-q)}[\tpi_0 : \tpi_1]$.  Further identities can be derived using the referential-representational dualities described below \cref{example:zhang_div}.
\end{example}

\begin{example}[Bregman Information induced by the Beta-Divergence]\label{example:beta_bi}
For $\lambda = 1$, the $q$-annealing path arises from the expected Beta-divergence minimization over the first argument of $D_B^{2-q}$ or second argument of $D_f$
\begin{align}
    \qpath(\vz) 
    &= \argmin \limits_{\tpi} \,  (1-\beta) \DBeta{2-q}[\tpi :\tpi_0] + \beta \, \DBeta{2-q}[\tpi :\tpi_1] .
\end{align}
The resulting Bregman Information is a Jensen gap for a functional which raises each density to the power $2-q$ and integrates,
\begin{equation}
\resizebox{.95\textwidth}{!}{$
    \hspace*{-.2cm} 
    \BregInfo{f,\rho} \of{\vupi, \vwbeta} 
    = 
    \frac{1}{2-q} \left( (1-\beta) \int \tpi_0(\vz)^{2-q} d\vz + \beta \int \tpi_1(\vz)^{2-q} d\vz - \int \qpath(\vz)^{2-q}  d\vz \right). \nonumber
$}
\end{equation}

\normalsize
Finally, we note that the mixture path minimizes the expected dual divergence $\taubreg[\tau({\firstarg}):\tau({\secondarg})]$, where the resulting Bregman Information becomes
\small 
\begin{align}
    \hspace*{-.1cm} 
    \BregInfo{f^*,\tau} \of{\vupi, \vwbeta}  &= 
     \frac{1}{1-q}\frac{1}{2-q} \left( (1-\beta) \int \tpi_0(\vz)^{2-q} d\vz + \beta \int \tpi_1(\vz)^{2-q} d\vz - \int \tpimix(\vz)^{2-q}  d\vz \right). \nonumber 
\end{align}
\normalsize
\end{example}

\robsubsection{{ Parametric Interpretation using $q$-Exponential Family with Parameter $\beta$}}\label{sec:parametric}

In this section, we interpret $q$-paths between given endpoints $\bm{\tpi}=\{\tpi_0, \tpi_1\}$ as a one-parameter deformed exponential family with natural parameter $\beta$, allowing us to highlight connections with Rényi's $\alpha$-divergence and the $\alpha$-mixture family \citep{amari2007integration, wong2021tsallis}.

Our starting point is to observe that the $q$-path in \cref{eq:qpath_minimization} can be written as a one-dimensional deformed `likelihood ratio exponential family' \citep{masrani2021q, brekelmans2020lkd, brekelmans2020tvo}, with $\tpi_0(\vz)$ as a base density, the $q$-log likelihood ratio $T(\vx) = \log_q \frac{\tpi_1(\vz)}{\tpi_0(\vz)}$ as the sufficient statistic, and the mixing weight $\beta$ as the natural parameter.   Assuming that $\tpi_1$ is absolutely continuous with respect to $\tpi_0$, 
\begin{align}
    \qpath(\vz) = \tpi_0(\vz) \exp_q \Big\{ \beta \cdot \log_q \frac{\tpi_1(\vz)}{\tpi_0(\vz)} \Big\} \label{eq:q_lref} .
\end{align}
A key observation is
that the density ratio $\qpath(\vz)/\tpi_0(\vz)$ is $\rho$-affine in the $\log_q$ representation $\rho(\qpath(\vz)/\tpi_0(\vz))= \beta \cdot T(\vz)$ which, for given endpoint densities $\{\tpi_0, \tpi_1\}$, 
links the parametric family to the quasi-arithmetic mean and rho-tau Bregman Information divergence functionals (also see \myapp{parametric_nonparametric}).

\begin{example}[Parametric View of $q$-Paths and Zhang $(\alpha,\beta)$ Divergence]\label{example:q_path_para}
We will consider the multiplicative normalization constant for the $q$-likelihood ratio family 
as the convex generating function,
\begin{align}
\frac{1}{q} {\Z}_q(\beta) = \frac{1}{q}\int \qpath(\vz) d\vz = \frac{1}{q}\int \tpi_0(\vz) \exp_q \Big\{ \beta \cdot \log_q \frac{\tpi_1(\vz)}{\tpi_0(\vz)} \Big\} d\vz, \label{eq:z_qexp}
\end{align}
where for the geometric path, we recover $\Z_1(\beta) = \int \tpi_0(\vz)^{1-\beta} \tpi_1(\vz)^{\beta} d\vz$.

The Amari $\alpha$-divergence arises as the Bregman divergence generated by the scaled normalization constant $\frac{1}{q}\Z_q(\beta)$ (see \myapp{parametric_breg} for derivations),
\begin{align}
    \hspace*{-.2cm} D_{\frac{1}{q}\Z_q}&[\beta_a : \beta_b]  = D_{A}^{(q)}[\tpi_{\beta_a}:\tpi_{\beta_b}] . \label{eq:alpha_zexp} 
\end{align}

As an alternative to the interpretations in \cref{example:alpha} and \ref{example:alpha_bi},
we may now represent the $q$-path intermediate density $\qpath(\vz)$ using the arithmetic mixture of parameters $\beta$ and the solution to the expected parametric Bregman divergence minimization \citep{Banerjee2005},
\begin{align}
     \beta &= (1-\beta) \cdot 0 + \beta \cdot 1 = \argmin \limits_{\beta_r} \, (1-\beta) D_{\frac{1}{q} \Z_q}[ 0 : \beta_r] + \beta \,  D_{\frac{1}{q} \Z_q}[ 1 : \beta_r] \, . \label{eq:kl_breg_min_unnorm} 
\end{align}
This further leads to a parametric interpretation of Zhang's ($\alpha$,$\beta$) divergence \cref{eq:qpath_normalizer} as the Bregman Information,
\begin{align}
 D_Z^{(\beta,q)}[\tpi_0 : \tpi_1] &:= 
    \frac{1}{\beta(1-\beta)}\frac{1}{q} \left( \int  (1-\beta) \, \Z_q(0)  + \beta \, \Z_q(1) - 
    \Z_q(\beta) \right).
\end{align}
\end{example}
For arbitrary endpoint densities $\tpi_0, \tpi_1$ satisfying absolutely continuity and integrability conditions, we can thus construct a rich family of divergence functionals using the deformed likelihood ratio exponential family.

\begin{example}[Parametric View of the Geometric Path]\label{example:geo_para}
The Bregman divergence induced by $\Z_1(\beta)$ yields the reverse \kl divergence (see \myapp{parametric_breg}),
\begin{align}
    D_{\Z_1}[\beta_a : \beta_b]&= \DKL[\tpi_{\beta_b}^{(\text{geo})} : \tpi_{\beta_a}^{(\text{geo})}] .
\end{align}
\end{example}
As in \cref{example:q_path_para} above, we represent the geometric averaging path $\tpi_{\beta}^{\text{geo}}$ using the simple arithmetic mixture of endpoints $\bm{u} = \{ 0, 1\}$ with weights $\bm{\beta}=\{1-\beta, \beta \}$.
At this minimizer, the scaled Bregman Information matches the Amari $\alpha$-divergence $D_A^{(\beta)}[\tpi_0 : \tpi_1]$, as in \cref{eq:amari_breg}
\small
\begin{align}
   \hspace*{-.1cm} 
   D_{f,\rho}^{(\beta)}[\tpi_0:\tpi_1]
   &= \frac{1}{\beta(1-\beta)} \bigg( (1-\beta) \Z_1(0) + \beta \Z_1(1) -  \Z_1(\beta) \bigg) =  D_A^{(\beta)}[\tpi_0 : \tpi_1] \nonumber \\
   &= \frac{1}{\beta(1-\beta)} \bigg( (1-\beta) \int \tpi_0(\vz) d\vz + \beta \int \tpi_1(\vz) d\vz - \int 
    \tpi_0(\vz)^{1-\beta} \tpi_1(\vz)^{\beta}  
    d\vz \bigg) . \nonumber
\end{align}
\normalsize
We again contrast the interpretation of the Amari $\alpha$-divergence as a Bregman divergence for the $q$-exponential family in the previous example (corresponding to $\beta = 1$ in \cref{eq:scaled_breg}), and as a Bregman Information with weight $\beta$ for the exponential family and geometric path ($q=1$) in this example.

\begin{example}[Rényi $\alpha$-Divergence as Bregman Information for Geometric Path between Normalized Densities]\label{example:renyi}
For the geometric averaging path, we find that there is no distinction between treating the inputs as normalized probability densities $\bm{\pi}=\{\pi_0, \pi_1\}$ instead of unnormalized $\tpi$ (see \cref{example:normalized} for differences in the case of $q$-paths).  In particular, treating the \textit{log} partition function as the generating function and restricting input to \textit{normalized} densities, we have
\begin{align}
   & D_{\psi}[\beta_a : \beta_b]= \DKL[\pi_{\beta_b}^{(\text{geo})} : \pi_{\beta_a}^{(\text{geo})}] \quad \text{for} \quad \psi(\beta) \coloneqq \log \Z_1(\beta), 
 \end{align}
which leads to the expected divergence minimization over $\beta$
\begin{align}
&\beta = (1-\beta) \cdot 0 + \beta \cdot 1 = \argmin \limits_{\beta_r} \, (1-\beta) \Breg{\norm}[ 0 :\beta_r ] + \beta \, \Breg{\norm}[ 1 :\beta_r ] \label{eq:arbirary_breg} .
\end{align}
We recognize the resulting scaled Bregman Information as the R\'enyi divergence \citep{renyi1961, van2014renyi} of order $\beta$,\footnote{Note, we order the arguments to match the Amari $\alpha$-divergence \citep{zhang2004divergence, amari2007integration}, and use constant factors to induce limiting behavior of $\DKL[\firstargn : \secondargn]$ as $\alpha \rightarrow 0$ and $\DKL[\secondargn:\firstargn]$ as $\alpha \rightarrow 1$.}
\small
\begin{align}
   \hspace*{-.1cm} 
   D_{f,\rho}^{(\beta)}[\tpi_0:\tpi_1]
   = \frac{1}{\beta(1-\beta)} \Big( (1-\beta) \psi(0) + \beta \psi(1) -  \psi(\beta) \Big) &=\frac{ -1}{\beta(1-\beta)}   \log \int \pi_0(\vz)^{1-\beta} \pi_1(\vz)^{\beta} d\vz  . 
    \nonumber \\
& 
 =  D_{R}^{(\beta)} \big[\pi_0(\vz) : \pi_1(\vz) \big] 
\end{align} 
\normalsize
which matches the result in \citet{nielsen2011renyi} that the R\'enyi divergence between normalized distributions in the same exponential family is proportional to a gap in Jensen's inequality.
\end{example}

\begin{example}[$q$-Paths between Normalized Probability Densities]\label{example:normalized}
Quasi-arithmetic means in the $q$- or $\alpha$-representation have been studied extensively \citep{renyi1961, amari2007integration, eguchi2015path, eguchi2016information, wong2021tsallis}, often in the context of normalized probability densities. 
Similarly to our results, it has been shown that the 
quasi-arithmetic mean in the $q$-representation 
minimizes the expected Amari $\alpha$-divergence to a normalized density in the second argument \citep{amari2007integration}.  

However, we highlight that the quasi-arithmetic mean of normalized $\bm{\pi}=\{\pi_0, \pi_1\}$ or unnormalized densities $\bm{\tpi}=\{\tpi_0, \tpi_1\}$ \textit{do not match} in the case of $\rho(u) = \log_q u$. 
For normalized inputs, the quasi-arithmetic mean $\bar{\pi}_{\beta}^{(q)}$ becomes
\begin{align}
\hspace*{-.2cm} \bar{\pi}_{\beta}^{(q)}(\vz) 
&= \frac{1}{\bar{\Z}_q(\beta)} \pi_0(\vz) \exp_q \Big\{\beta \cdot \log_q \frac{\pi_1(\vz)}{\pi_0(\vz)} \Big\} \label{eq:mix_of_norm}\\
&= \frac{1}{c \cdot {\bar{\Z}_q(\beta)}} \tpi_0(\vz) \exp_q \Big\{\frac{\beta \Z(1)^{q-1} }{(1-\beta) \Z(0)^{q-1} + \beta \Z(1)^{q-1}} \log_q \frac{\tpi_1(\vz)}{\tpi_0(\vz)} \Big\}  \label{eq:mix_of_normalized}
\end{align}
where $\Z(0)$ and $\Z(1)$ are the normalization constants for $\tpi_0$ and $\tpi_1$ and $\bar{\Z}_q(\beta)$ normalizes \myeq{mix_of_norm}.  

Note that \myeq{mix_of_norm} contains the $q$-logarithmic ratio of normalized densities, while \myeq{mix_of_normalized} contains the ratio of unnormalized densities.
Moving between the two requires an \textit{additional} factor of $c = [(1-\beta) \Z(0)^{-(1-q)} + \beta \Z(1)^{-(1-q)}]^{\frac{-1}{1-q}}$, which is a quasi-arithmetic mean of $\Z^{-1}$ at the endpoints.
Thus, from \myeq{mix_of_normalized}, we observe
that the $q$-mixture of normalized distributions, $\bar{\pi}_{\beta}^{(q)}(\vz)$, does not directly coincide with the $q$-mixture of unnormalized endpoints, $\pi_{\beta}^{(q)}(\vz) \propto \tpi_{\beta}^{(q)}(\vz).$     Instead, we need to 
adjust the mixing 
weight $\beta^{\prime} = \frac{\beta \Z(1)^{q-1}}{(1-\beta) \Z(0)^{q-1}+\beta \Z(1)^{q-1} }$ to obtain $\bar{\pi}_{\beta}^{(q)}(\vz) = \pi_{\beta^{\prime}}^{(q)}(\vz) \propto \tpi_{\beta^{\prime}}^{(q)}(\vz)$.  

A similar time-reparameterization appears in \citet{wong2021tsallis}, which interprets the normalized $q$-exponential family in terms of a deformation of the standard convex duality.
This reparameterization of the mixing parameter $\beta$ is usually intractable to calculate explicitly due to need to calculate normalization constants $\Z(0)$, $\Z(1)$.
By contrast, for the geometric path ($q=1$) in \cref{example:geo} or \cref{example:geo_para}-\ref{example:renyi}, the mixing parameter is the same for annealing between unnormalized and normalized endpoint densities.
\end{example}

\begin{example}[Moment-Averaging Path of \citet{grosse2013annealing}]
Finally, consider the special case of constructing an annealing path between two (normalized) endpoints $\pi_{\boldtheta_0}$ and $\pi_{\boldtheta_1}$ within an exponential family with sufficient statistics $\bt = \{ T^i(\vx) \}_{i=1}^d$.    In this case, we can recover the moment-averaging path of \citet{grosse2013annealing} as a quasi-arithmetic mean $ \btheta_\beta = \boldeta^{-1}((1-\beta) \, \boldeta(\btheta_0) + \beta  \, \boldeta(\btheta_1)) $  using the expectation parameter mapping  $\rho(\btheta) = \boldeta(\btheta) = \mathbb{E}_{\expfam}\left[ \bt \right]$.   See \myapp{grosse} for details.
\end{example}

\begin{example}[Annealing within (Deformed) Exponential Families]\label{example:annealing_main}
While we constructed one-dimensional deformed exponential families from arbitrary endpoint densities in \cref{eq:q_lref} and \cref{example:q_path_para}-\ref{example:geo_para},  the $\rho$-affine property of deformed exponential families yields a similar simplification for the special case of endpoints $\tpi_{\boldtheta_0}^{(q)}, \tpi_{\boldtheta_1}^{(q)}$ which belong to the same parametric family.
In particular, for the $\rho(\tpi) = \log_q \tpi$ path within the $\exp_q$ family, we have
\begin{align}
    \btheta_\beta &= (1-\beta) \, \btheta_0 + \beta  \, \btheta_1  = \argmin \limits_{\btheta_r} ~ (1-\beta) D_{\frac{1}{q}\mathcal{Z}_q}[\btheta_0 : \btheta_r] + \beta ~ D_{\frac{1}{q}\mathcal{Z}_q}[\btheta_1 : \btheta_r], \nonumber
\end{align}
where $D_{\frac{1}{q}\mathcal{Z}_q} = D_{A}^{(q)}$ is the Amari $\alpha$-divergence as in \cref{eq:alpha_zexp} and the \kl divergence or exponential family case is recovered for $q=1$.  See \myapp{parametric_breg} and \cref{example:annealing} for detailed discussion.
\end{example}

\section{Conclusion and Discussion}
In this work, we have generalized the Bregman divergence barycenter results of \citet{Banerjee2005} to arbitrary monotonic embedding functions and quasi-arithmetic means.   We identified annealing paths from the MCMC literature \citep{neal2001annealed, grosse2013annealing, masrani2021q} as a natural setting where such quasi-arithmetic means appear.  In particular, for two unnormalized density functions as input, we related the rho-tau Bregman divergence framework \citep{zhang2004divergence,zhang2013nonparametric,naudts2018rho} to the Bregman Information from \citep{Banerjee2005}, and highlighted how various divergence functionals comparing $\tpi_0$ and $\tpi_1$ are associated with intermediate densities $\tpi_\beta$ along an annealing path.

We have seen that Amari's $\alpha$-divergence arises via two different approaches.   For the geometric averaging path ($q=1$), the $\alpha$-divergence appears as a Bregman Information with its order set by the mixture parameter $\beta$ (\cref{example:geo}).   For the $q$-path, the $\alpha$-divergence appears as a rho-tau Bregman divergence with order set by the deformation parameter $q$ (\cref{example:alpha}).      
In both cases, we provided parametric interpretations 
involving the Bregman divergence generated by the multiplicative normalization constant $\mathcal{Z}_q(\beta)$ of a one-dimensional (deformed) likelihood ratio exponential family.

However, for the geometric path, we also constructed a Bregman divergence using the \textit{log} partition function of the one-dimensional exponential family $\log Z(\beta)$, where the corresponding scaled Bregman Information recovers the R\'enyi divergence as the gap in Jensen's inequality for $\psi(\beta)$ \citep{nielsen2011renyi}.   This derivation in terms of the mixing parameter $\alpha = \beta$ is distinct from the approach of \citet{wong2021tsallis}, where the 
R\'enyi divergence arises using the deformation parameter $\alpha= q$, the potential function $\log \mathcal{Z}_q(\beta)$, and deformed $c$-duality.
Further understanding the relationship between these constructions in terms of the $\beta$ and $q$ parameters remains an interesting question for future work.   

Moving beyond the ubiquitous use of the \kl divergence in machine learning, it would be interesting to further explore the 
use of rho-tau divergences 
in applications such as variational inference \citep{knoblauch2019generalized}, constructing prediction losses \citep{blondel2020fenchelyoung, amid2022layerwise}, 
and regularized reinforcement learning \citep{geist2019theory}.   Clustering approaches based on quasi-arithmetic means have been proposed in \citep{xu2019power, vellal2022bregman}, and our insights might be used to develop probabilistic interpretations or algorithms similar to the original clustering motivations of Bregman Information \citep{Banerjee2005}.
Finally, future work might consider paths based on $\phi$-deformed logarithms \citep{naudts2011generalised, naudts2018rho}, or investigate ways to adaptively choose or learn a suitable path  \citep{syed2021parallel} or annealing schedule \citep{goshtasbpour2023adaptive} based on statistics of a given sampling problem. 

\clearpage
\small
\vheader
\bibliographystyle{spbasic}      
\bibliography{_ref} 
\normalsize 

\newpage
\appendix
\normalsize
\textbf{Summary of Appendix}
In \myapp{ais_intro}, we review annealed importance sampling as an example \gls{MCMC} technique.  In \myapp{thm_pf}, we prove our main result (\cref{thm:breg_info}).   We discuss parameteric Bregman divergences and annealing paths between deformed exponential families from our representational perspective in \myapp{parametric_breg}, where the $\rho$-affine property plays a crucial role.  
In \myapp{info_geo}, we review the Eguchi relations and information-geometric structures induced by the rho-tau Bregman divergence and rho-tau Bregman Information functionals (see \cref{tab:connections}).   Finally, in \myapp{qgeo}, we prove \cref{thm:geodesic} showing that quasi-arithmetic paths in the $\rho$-representation are geodesics with respect to affine connections induced by the rho-tau Bregman divergence.

\normalsize
\vheader 
\vheader
\section{Annealed Importance Sampling}\label{app:ais_intro}
We briefly present annealed importance sampling (\gls{AIS}) \citep{neal2001annealed} as a representative example of an \gls{MCMC} method where the choice of annealing path can play a crucial role  \citep{grosse2013annealing, masrani2021q}.  \gls{AIS} relies on similar insights as the Jarzynski equality in nonequilibrium thermodynamics \citep{jarzynski1997equilibrium,jarzynski1997equality}, and may be used to estimate (log) normalization or partition functions or sample from complex distributions.  

More concretely, consider an initial distribution $\pi_0(\vz) \propto \tpi_0(\vz)$ which is tractable to sample and is often chosen to have normalization constant $\mathcal{Z}_0= 1$. 
We are often interested in estimating the normalizing constant $\mathcal{Z}_1 = \int \tpi_1(\vz)$ of a target distribution $\pi_1(\vz) \propto \tpi_1(\vz)$, where only the unnormalized density $\tpi_1$ is available.
Since direct sampling from $\pi_0(\vz)$ may require prohibitive sample complexity to accurately estimate the normalization constant ratio $\mathcal{Z}_1/\mathcal{Z}_0$  \citep{chatterjee2018sample, brekelmans2022improving}, \gls{AIS} decomposes the estimation problem into a sequence of easier subproblems using a \textit{path} of intermediate distributions $\{\tpi_{\beta_t}(\vz)\}_{\beta_0=0}^{\beta_T=1}$ between the endpoints $\tpi_0(\vz)$ and $\tpi_1(\vz)$.   Most often, the geometric averaging path is used,
\begin{align}
     \tpi_{\beta_t} (\vz) = \frac{{\tpi}_0(\vz)^{1-\beta_t} \, {\tpi}_1( \vz)^{\beta_t}}{\mathcal{Z}({\beta_t})}  \quad \text{where} \quad 
     \mathcal{Z}({\beta_t}) 
     = \int {\tpi}_0(\vz)^{1-\beta_t} \, {\tpi}_1(\vz)^{\beta_t} d\vz .  \label{eq:def_geo}
\end{align}
\gls{AIS} proceeds by constructing a sequence of Markov transition kernels $\tfwd(\vz_{t+1}|\vz_{t})$ which leave $\pi_{\beta_t}$ invariant, with $\int \pi_{\beta_t}(\vz_t) \tfwd(\vz_{t+1}|\vz_{t}) d\vz_t= \pi_{\beta_t} (\vz_{t+1})$.  Commonly, this is achieved using kernels such as \gls{HMC} or Langevin dynamics \citep{neal2011mcmc} which transform the samples, with Metropolis-Hastings accept-reject steps to ensure invariance.   
To interpret \gls{AIS} as importance sampling in an extended state space \citep{neal2001annealed, brekelmans2022improving},
we define the reverse kernel as $\trev(\vz_{t} | \vz_{t+1}) = \frac{\pi_{{\beta_t}}(\vz_t)\tfwd(\vz_{t+1} | \vz_{t}) }{\int  \pi_{{\beta_t}}(\vz_t) \tfwd(\vz_{t+1} | \vz_{t})d\vz}$. 
Using the invariance of $\tfwd$, we observe that $\pi_{{\beta_t}}(\vz_{t+1}) \trev(\vz_{t} | \vz_{t+1})  = \pi_{{\beta_t}}(\vz_t) \tfwd(\vz_{t+1} | \vz_{t})$.

To construct an estimator of $\mathcal{Z}_T/\mathcal{Z}_0$ using \gls{AIS}, we sample from $\vz_0 \sim \pi_0(\vz)$, run the transition kernels in the forward direction to obtain samples $\vz_{1:T}$, and calculate the importance sampling weights along the path,
\begin{equation}
    \resizebox{.93\textwidth}{!}{$
    w(\vz_{0:T}) = \frac{\tpi_1(\vz_T) \prod \limits_{t=0}^{T-1} \trev(\vz_{t} | \vz_{t+1})}{\tpi_0(\vz_0) \prod \limits_{t=0}^{T-1} \tfwd(\vz_{t+1} | \vz_{t})} = \prod \limits_{t=1}^{T}  \dfrac{\tpi_{\beta_t}(\vz_{t})}{\tpi_{\beta_{t-1}}(\vz_{t})} = \prod \limits_{t=1}^{T} \bigg( \dfrac{\pi_T( \vz_{t})}{\pi_0(\vz_{t})} \bigg)^{\beta_{t}-\beta_{t-1}} . 
    $}
\end{equation}
Note that we have used the above identity relating $\tfwd$ and $\trev$ in the second equality, and the definition of the geometric averaging path in the last equality.   

Finally, it can be shown that $w(\vz_{0:T})$ provides an unbiased estimator of $\mathcal{Z}_1/\mathcal{Z}_0$, with $\mathbb{E}[w(\vz_{0:T})] = \mathcal{Z}_1/\mathcal{Z}_0$ \citep{neal2001annealed}.   We can thus estimate the partition function ratio using the empirical average over $K$ annealing chains,  $\mathcal{Z}_1/\mathcal{Z}_0 \approx \frac{1}{K} \sum w_{0:T}^{(k)}$.  We detail the complete \gls{AIS} procedure in \myalg{ais}.

\gls{AIS} is considered among the gold standard methods for estimating normalization constants.
Closely related \gls{MCMC} methods involving path sampling \citep{gelman1998simulating} include Sequential Monte Carlo \citep{del2006sequential}, which may involve resampling steps to prioritize higher-probability $\vz_t$, or parallel tempering \citep{earl2005parallel}, which runs $T$ parallel sampling chains in order to obtain accurate samples from each $\pi_{\beta_t}(\vz)$.

\vspace*{-.25cm} 
\vheader
\vheader
\section{Proof of Theorem 1}\label{app:thm_pf}
In the main text and below, we present and prove \mythm{breg_info} in terms of scalar inputs and decomposable Bregman divergences.   As we show in \myapp{thm1_vector} \mythm{breg_info_vector}, a similar Bregman divergence-minimization interpretation of quasi-arithmetic means holds for vector-valued inputs, where the representation function is applied element-wise.   Most commonly, vectorized divergences are constructed between parameter vectors $\btheta$ of some (deformed) exponential family.   However, we argue in \myapp{parametric_nonparametric} that these cases are best understood using representations of unnormalized densities (as in \mythm{breg_info}) and the $\rho$-affine property of parametric families.   Nevertheless, we provide proof of \mythm{breg_info_vector}  for completeness.

\rtbreginfo*
\begin{proof}
\textbf{(ii):} We first show the optimal representative $\mu^{*}
= \rho^{-1}\big($ $\sum_{i=1}^N \beta_i \, \rho(\tpi_i)\big)$ yields a Jensen diversity in $(ii)$, before proving this choice is the unique minimizing argument.  

Expanding the expected divergence in \cref{eq:breginfo_optimization} for $\mu = {\mu}^*$, we have
$\sum_{i=1}^N  \beta_i \rhobreg [ \rho(\tpi_i) : \rho \of{{\mu}^*}  ] = \sum_{i=1}^N \beta_i \potentialf[\rho(\tpi_i)] - \potentialf[\rho({\mu}^*)]$
$- \int \big( \sum_{i=1}^N  \beta_i \rho(\tpi_i) - \rho({\mu}^*)
\big) \tau({\mu}^*) d\vz $.   
Since $\sum_i \beta_i \rho(\tpi_i) = \rho({\mu}^*)$, the final term cancels to yield
\small 
\begin{align}
\sum \limits_{i=1}^N  \beta_i \rhobreg\left[ \rho(\tpi_i) : \rho \of{{\mu}^*} \right] 
&= \sum \limits_{i=1}^N \beta_i \potentialf[\rho(\tpi_i)] - \potentialf[\rho({\mu}^*)]. \label{eq:jg}
\end{align}
\normalsize
\textbf{(i):}  For any other representative $\mu$, we write the difference in expected divergence and use \myeq{jg} to simplify,
\small
\begin{align}
  &\sum \limits_{i=1}^N \beta_i  \, \rhobreg \left[ \rho(\tpi_i) : \rho(\mu)  \right] - \sum \limits_{i=1}^N \, \beta_i  \, \rhobreg \left[ \rho(\tpi_i) : \rho \of{{\mu}^*}
  \label{eq:displaced_jensen_gap}
  \right]\\
  &= \sum \limits_{i=1}^N \cancel{\beta_i \potentialf[\rho(\tpi_i)]}  - \potentialf[\rho(\mu)] - \int \Big( \sum \limits_{i=1}^N  \beta_i \rho\of{\tpi_i(\vz)} - \rho\of{\mu(\vz)} \Big) \tau(\mu(\vz)) d\vz  \\
  &\phantom{=\sum \limits_{i=1}^N \cancel{\beta_i \potentialf[\rho(\tpi_i)]}  - \int \potentialf[\rho(\mu)]} -\Big( \sum \limits_{i=1}^N\, \cancel{\beta_i \potentialf[\rho(\tpi_i)]} - \potentialf[\rho({\mu}^*)] \Big) \nonumber \\
  &= \potentialf[\rho({\mu}^*)] - \potentialf[\rho(\mu)] - \int \Big( \rho\of{{\mu}^*(\vz)} - \rho\of{\mu(\vz)} \Big) \tau(\mu(\vz)) d\vz \nonumber \\
  &= \rhobreg\big[
  \rho\of{{\mu}^*}
  : \rho \of{\mu}\big] . \label{eq:breg_gap}
\end{align}
\normalsize
where we note that
$\rho({\mu}^*) = \sum_{i=1}^N  \beta_i \rho(\tpi_i)$.  
The rho-tau divergence is minimized if and only if 
$\rho({\mu}^*) = \rho(\mu)$
\citep{zhang2004divergence}, thus proving $(i)$.

\textbf{(iii):} Finally, we can express the suboptimality gap in \myeq{displaced_jensen_gap} or rho-tau Bregman divergence in \myeq{breg_gap} as the gap in a conjugate optimization.   
Considering the conjugate expansion of $\potentialf[\rho({{\mu}^*})]$, we have
\small
\begin{align}
\hspace*{-.2cm} \potentialf[\rho({\mu}^*)] &= \sup \limits_{\tau(\mu)} \int \rho\of{{{\mu}^*}(x)} \tau(\mu(x)) d\vz - \potentialdual[\tau(\mu)] \\
&\geq  \int \rho{\of{{\mu}^*(\vz)}} \tau\of{\mu(\vz)} d\vz - \potentialdual[\tau({\mu})] \nonumber
\end{align}
\normalsize
for any choice of $\tau({\mu})$.   This provides a lower bound on $\potential[\rho({\mu}^*)]$, where the gap in the lower bound is the canonical form of the Bregman divergence.  Indeed, substituting 
$\potentialdual[\tau({\mu})] =  \int  \rho(\mu(\vz)) \tau(\mu(\vz)) d\vz - \potentialf[\rho({\mu})]$ in \myeq{displaced_jensen_gap}, we have
\small 
\begin{align}
   \rhobreg\big[ &\rho \of{\mu^*
   } : \rho \of{\mu}\big]  = \potentialf[\rho({\mu}^*)] + \potentialdual[\tau({\mu})] - \int\rho\of{{\mu}^*(\vz)} \tau\of{\mu(\vz)} d\vz \nonumber\\
    &\phantom{====} =\potentialf[\rho({\mu}^*)] - \potentialf[\rho(\mu)] - \int \Big( \rho\of{{\mu}^*(\vz)} - \rho\of{\mu(\vz)}\Big) \tau\of{\mu(\vz)} d\vz \, . \nonumber \qquad \hfill \qed
\end{align}
\normalsize

\end{proof}
\robsubsection{{Interpretations of \mythm{breg_info}(iii):}} Conjugate optimizations which treat $f$-divergences as a convex function of one argument 
are popular for providing variational lower bounds on divergences \citep{nguyen2010estimating, poole2019variational} or min-max optimizations for adversarial training \citep{nowozin2016f, nock2017fgan}.  Note however, that this proof provides a variational \textit{upper} bound on the Bregman Information, which includes the Jensen-Shannon divergence (\cref{example:jsd}) and mutual information (\citet{Banerjee2005} Ex. 6) as examples.  To our knowledge, this upper bound has not been used extensively in the literature. 

The equality in \myeq{bias_var} can also be interpreted as a generalized bias-variance tradeoff for Bregman divergences (\cite{pfau2013generalized, adlam2022understanding}).

\robsubsection{{Rho-Tau Bregman Information with Vector-Valued Inputs}}\label{app:thm1_vector}
A more standard setting is to consider a finite-dimensional Bregman divergence over a vector of inputs, 
such as the natural parameters $\btheta$ of a (deformed) exponential family $\tpi_{\btheta}^{(q)}(\vx) = \expbase \exp_q \{ \langle \boldtheta , \bt \rangle \}$. \textit{ However, we argue that this setting is best captured in our representational framework} (see \myapp{parametric_breg}), using $\rho(\tpi_{\btheta}^{(q)}(x)) = \log_q \frac{\tpi_{\btheta}^{(q)}(x)}{g(x)} = \langle \btheta, \bt \rangle = \sum_{j=1}^d \theta^j T^j(x)$ and the $\rho$-linearity of the density with respect to the appropriate base measure.

Nevertheless, we would also like to extend \cref{thm:breg_info} to hold for $N$ vector-valued, $d$-dimensional inputs.

\begin{thm}
 Consider a collection of inputs $\bm{\vectorinp} = \{ \bm{\vectorinp}_i \}_{i=1}^N$ where $\bm{\vectorinp}_{i} = \{\vectorinp_{i}^1,..., \vectorinp_{i}^j, ..., \vectorinp_{i}^d \}_{j=1}^d$.   In this case, 
consider applying the monotonic representation function $\rho : \cX_\rho^d \subset \mathbb{R}^d \rightarrow \cY_\rho^d \subset \mathbb{R}^d$ \textit{elementwise} $\rho(\bvector_i) \coloneqq \{\rho(\vectorinp_{i}^1), ..., \rho(\vectorinp_{i}^j), ... , \rho(\vectorinp_{i}^d) \}_{j=1}^d$.   For a convex generating function $F : \cY_\rho^d \subset \mathbb{R}^d \rightarrow \mathbb{R}$, 
define the Bregman divergence as
\begin{align}
D_F[\rho(\bm{\vectorinp}_a) : \rho(\bm{\vectorinp}_b)] = F\of{\rho(\bm{\vectorinp}_a)} - F\of{\rho(\bm{\vectorinp}_b)} - \langle  \rho(\bm{\vectorinp}_a) - \rho(\bm{\vectorinp}_b), \nabla_{\rho} F\of{\rho(\bm{\vectorinp}_b)} \rangle \nonumber
\end{align}
where the inner product sums over dimensions $1 \leq j \leq d$.   Using analogous definition of a conjugate representation as in \cref{sec:rhotau}, we have $\tau(\bvector) = \nabla_{\rho} F(\rho(\bvector))$ with $\tau(\vectorinp^j)= \frac{\partial}{\partial (\rho(\bvector))^j} F(\rho(\bvector)) $.

Finally, consider discrete mixture weights $\vwbeta = \{\beta_i \}_{i=1}^N$ with $\sum_i \beta_i = 1$, and
assume the expected value 
$\mu_{\rho}(\bvector, \vwbeta) \coloneqq \sum_{i=1}^N \beta_i ~ \rho(\bvector_i)  \in \mathrm{ri}(\cY_\rho^d)$
is in the relative interior of the range of $\rho$. 
Then, we have the following results,
\begin{itemize}
\item[(i)] For a 
Bregman divergence with generator $F$, the optimization 
\begin{align}
    \BregInfo{{F,\rho}} \of{{\bm{\vectorinp}, \vwbeta}} &\coloneqq \min \limits_\mu \sum \limits_{i=1}^N \, \beta_i  \, D_F \left[ \rho(\bvector_i) : \rho(\bm{\mu})  \right] \, . \label{eq:breginfo_optimization_vec}
\end{align}
{has a unique minimizer given by the quasi-arithmetic mean with representation function $\rho(\tpi)$}
\begin{align}
       \bm{\mu}_\rho^*(\bm{\vectorinp}, \vwbeta) = 
       \rho^{-1}\left( \sum \limits_{i=1}^N \beta_i \, \rho\of{\bvector_i} \right) 
       = \argmin  \limits_{\bm{\mu}} \sum \limits_{i=1}^N \, \beta_i  \, D_F \left[ \rho(\bvector_i) : \rho(\bm{\mu})  \right] \nonumber .
\end{align}
The arithmetic mean is recovered for $\rho(\bvector_i) = \bvector_i$ and any $F$ \citep{Banerjee2005}.
\item[(ii)] At this minimizing argument, the value of the expected divergence in \myeq{breginfo_optimization_vec} is called the  \textup{Rho-Tau Bregman Information} and is equal to a
gap in Jensen's inequality
for the convex function $F$,
mixture weights $\vwbeta$, and inputs $\bvector = \{ \bvector_i\}_{i=1}^N$,
\begin{align}
    \BregInfo{{F,\rho}} \of{{\bvector, \vwbeta}} = \sum \limits_{i=1}^N \beta_i \, F\of{\rho(\bvector_i)}
    -  F\of{\rho(\bm{\mu}_\rho^*)} 
    . 
    \label{eq:breginfo_jg_vector_v}
\end{align}
\item[(iii)] Using $\bm{\mu} \neq \bm{\mu}_\rho^*(\bm{\vectorinp}, \vwbeta)$ as the representative in \myeq{breginfo_optimization_vec}, 
the suboptimality gap 
is a rho-tau Bregman divergence
\begin{align}
D_F\big[
 \rho\of{\bm{\mu}_\rho^*}  :  \rho\of{\bm{\mu}}\big] &= \sum \limits_{i=1}^N \beta_i ~ D_F\left[ \rho\of{\bvector_i} : \rho\of{\bm{\mu}}  \right] - \BregInfo{{F,\rho}}\of{\bm{\vectorinp}, \vwbeta} \label{eq:bias_var_v}. 
\end{align}
\end{itemize}
\label{thm:breg_info_vector}
\end{thm}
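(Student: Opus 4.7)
The plan is to mirror the scalar/decomposable argument of \mythm{breg_info} almost verbatim, replacing the pointwise products and integrals by Euclidean inner products $\langle \cdot , \cdot \rangle$ on $\mathbb{R}^d$ and replacing $f'$ by the gradient $\nabla_\rho F$. Throughout, the elementwise representation $\rho(\bvector) = (\rho(\vectorinp^1),\dots,\rho(\vectorinp^d))$ is injective on $\cX_\rho^d$ because $\rho$ is strictly monotonic, so $\rho(\bm{\mu}_1) = \rho(\bm{\mu}_2)$ forces $\bm{\mu}_1 = \bm{\mu}_2$; this will be the tool that turns ``Bregman divergence equals zero'' into ``arguments coincide.''

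\textbf{Step 1 (part (ii)).} I would first evaluate the objective at the candidate minimizer $\bm{\mu}_\rho^* = \rho^{-1}\!\left(\sum_{i=1}^N \beta_i \rho(\bvector_i)\right)$, which is well-defined under the relative-interior hypothesis. Expanding each summand of $\sum_i \beta_i D_F[\rho(\bvector_i):\rho(\bm{\mu}_\rho^*)]$ with the three-term definition and using linearity of the inner product in its first argument, the cross term becomes $\langle \sum_i \beta_i \rho(\bvector_i) - \rho(\bm{\mu}_\rho^*),\, \nabla_\rho F(\rho(\bm{\mu}_\rho^*)) \rangle$, which vanishes by the definition of $\bm{\mu}_\rho^*$. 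This leaves exactly the Jensen gap in \myeq{breginfo_jg_vector_v}.

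\textbf{Step 2 (parts (i) and (iii)).} For an arbitrary competitor $\bm{\mu}$, I would subtract the expression at $\bm{\mu}_\rho^*$ from that at $\bm{\mu}$ term by term, exactly as in \cref{eq:displaced_jensen_gap}. The $\sum_i \beta_i F(\rho(\bvector_i))$ contributions cancel, and the surviving terms are
\begin{align*}
F\of{\rho(\bm{\mu}_\rho^*)} - F\of{\rho(\bm{\mu})} - \Big\langle \rho(\bm{\mu}_\rho^*) - \rho(\bm{\mu}),\; \nabla_\rho F(\rho(\bm{\mu})) \Big\rangle,
\end{align*}
where I again used $\sum_i \beta_i \rho(\bvector_i) = \rho(\bm{\mu}_\rho^*)$ to rewrite the inner product term. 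This is precisely $D_F[\rho(\bm{\mu}_\rho^*):\rho(\bm{\mu})]$, establishing \myeq{bias_var_v} and therefore part (iii).

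\textbf{Step 3 (uniqueness).} Strict convexity of $F$ makes $D_F[\rho(\bm{\mu}_\rho^*):\rho(\bm{\mu})] \geq 0$ with equality iff $\rho(\bm{\mu}_\rho^*) = \rho(\bm{\mu})$; elementwise monotonicity of $\rho$ then forces $\bm{\mu} = \bm{\mu}_\rho^*$, proving (i). The arithmetic-mean special case in (i) follows immediately by taking $\rho = \mathrm{id}$. The anticipated obstacle is essentially bookkeeping: one must be careful that $\nabla_\rho F$ denotes differentiation with respect to the $\rho$-coordinates (so that it equals $\tau(\bm{\mu})$ componentwise), and that the inner product is the one induced on $\mathbb{R}^d$ so that the cross term factors as a linear functional of $\sum_i \beta_i \rho(\bvector_i) - \rho(\bm{\mu}_\rho^*)$. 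With those conventions fixed, the vectorized proof is a line-for-line transcription of the decomposable case.
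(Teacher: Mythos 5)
Your proposal is correct and follows essentially the same route as the paper's proof: evaluate the objective at the quasi-arithmetic mean so the cross term vanishes (giving the Jensen gap for (ii)), then subtract to show the suboptimality gap is exactly $D_F[\rho(\bm{\mu}_\rho^*):\rho(\bm{\mu})]$ for (iii), and conclude uniqueness from nonnegativity of the Bregman divergence together with injectivity of the elementwise $\rho$. The only difference is cosmetic: the paper cites \citet{zhang2004divergence} for the equality condition where you spell out the strict-convexity argument explicitly.
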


\begin{proof}
\textbf{(ii):} Again, we start by showing that the optimal representative $\bm{\mu}_\rho^{*}
= \rho^{-1}\big($ $\sum_{i=1}^N \beta_i \, \rho(\bvector_i)\big)$ yields a Jensen diversity in $(ii)$.
Expanding the expected divergence in \cref{eq:breginfo_optimization_vec} for $\bm{\mu} = \bm{\mu}_\rho^{*}$, we have
$\sum_{i=1}^N  \beta_i D_F [ \rho(\bvector_i) : \rho \of{\bm{\mu}_\rho^*}  ] = \sum_{i=1}^N \beta_i  F\of{\rho(\bvector_i)} - F\of{\rho(\bm{\mu}_\rho^*)}$
$- \langle   \sum_{i=1}^N  \beta_i \rho(\bvector_i) - \rho(\bm{\mu}_\rho^*),  \tau(\bm{\mu}_\rho^*) \rangle $.   
Since $\sum_i \beta_i \rho(\bvector_i) = \rho(\bm{\mu}_\rho^*)$, the final term cancels to yield
\small 
\begin{align}
\sum \limits_{i=1}^N  \beta_i D_F\left[ \rho(\bvector_i) : \rho \of{\bm{\mu}_\rho^*} \right] 
&= \sum_{i=1}^N \beta_i  F\of{\rho(\bvector_i)} - F\of{\rho(\bm{\mu}_\rho^*)} . \label{eq:jgv}
\end{align}
\normalsize
\textbf{(i, iii):} Writing the difference in expected divergence for a suboptimal representative $\bm{\mu}$ and using \myeq{jgv},
we have
\small
\begin{align}
  &\sum \limits_{i=1}^N \beta_i  \, D_F \left[ \rho(\bvector_i) : \rho(\mu)  \right] - \sum \limits_{i=1}^N \, \beta_i  \, D_F \left[ \rho(\bvector_i) : \rho \of{\bm{\mu}_\rho^*}
\nonumber 
  \right]\\
  &= \sum \limits_{i=1}^N \cancel{\beta_i F\of{\rho(\bvector_i)} }  - F\of{\rho(\bm{\mu})} - \big\langle \sum \limits_{i=1}^N  \beta_i \rho\of{\bvector_i} - \rho\of{\bm{\mu}} , \tau(\bm{\mu})\big\rangle   - 
  \Big( \sum \limits_{i=1}^N\, 
   \cancel{F\of{\rho(\bvector_i)}} - F\of{\rho(\bm{\mu}_\rho^*)} \Big) \nonumber \\
  &= D_F\big[
  \rho\of{\bm{\mu}_\rho^*}
  : \rho \of{\bm{\mu}}\big] . \label{eq:breg_gap_v}
\end{align}
\normalsize
The rho-tau divergence is minimized iff
$\rho(\bm{\mu}_\rho^*) = \rho(\bm{\mu})$
\citep{zhang2004divergence}, thus proving $(i)$.
\end{proof}

\vheader
\vheader
\section{Parametric Bregman Divergence and Annealing Paths within (Deformed) Exponential Families}\label{app:parametric_breg}
\vheader
Consider a $q$-exponential family with a $d$-dimensional natural parameter vector $\btheta \in \Theta \subset \bbR^d$, sufficient statistic vector $\bt$, and base density $\expbase$,
\begin{align}\label{eq:qexp_defn_app}
    \qexpfam(\vx) &= \frac{1}{\Z_q(\boldtheta)} \expbase \exp_q \{ \langle \boldtheta , \bt \rangle \} \qquad  \\
    \text{where} \quad \Z_q(\boldtheta) &= \int \expbase \exp_q \{ \langle \boldtheta , \bt \rangle \} d\vx . \nonumber
\end{align}
\normalsize
We let $\tqexpfam(\vx) = \expbase \exp_q \{ \langle \boldtheta , \bt \rangle \}$ denote the unnormalized density, often abbreviating to $\tpi_{\bm{\theta}}(\vx)$ for convenience.

From the convexity of $\exp_q$, it can be shown that the normalization constant $\Z_q(\boldtheta)$ is a convex function of the parameters $\boldtheta$, with first derivative
\small
\begin{align}
    \frac{\partial \Z_q(\boldtheta)}{\partial \theta^j} 
    &=  \int \expbase [1 + (1-q) \boldtheta \cdot \bt ]^{\frac{q}{1-q}} \cdot T^j(\vx) d\vx = \int \expbase^{1-q} \tpi_{\bm{\theta}}(\vx)^{q} \cdot T^j(\vx) d\vx
\end{align}
\normalsize
\vspace*{5pt}
\robpara{Parametric Interpretation of Amari $\alpha$-Divergence}
We now show that the Bregman divergence induced by $\frac{1}{q} \Z_q(\boldtheta)$, for $q > 0$, corresponds to the Amari $\alpha$-divergence between parametric unnormalized densities,
\small
\begin{align}
    D_{\frac{1}{q} \Z_q}[\bthetaprime: \boldtheta] &= \frac{1}{q} \Z_q(\bthetaprime) - \frac{1}{q} \Z_q(\boldtheta) - \langle \nabla \frac{1}{q} \Z_q(\boldtheta), \bthetaprime - \boldtheta \rangle \\
    &=\frac{1}{q} \Z_q(\bthetaprime) - \frac{1}{q} \Z_q(\boldtheta) - \frac{1}{q}\int \expbase^{1-q} \tpi_{\bm{\theta}}(\vx)^{q} \Big( \langle  \bthetaprime, \bt \rangle  - \langle  \btheta, \bt \rangle \Big)  d\vx \nonumber \\
    &\overset{(1)}{=} \frac{1}{q} \Z_q(\bthetaprime) - \frac{1}{q} \Z_q(\boldtheta) - \frac{1}{q} \int \expbase^{1-q} \tpi_{\bm{\theta}}(\vx)^{q} \Big( \log_q \frac{\tilde{\pi}_{\bthetaprime}(\vx)}{\expbase} -  \log_q \frac{\tpi_{\bm{\theta}}(\vx)}{\expbase} \Big)  d\vx \nonumber \\
    &= \frac{1}{q} \Z_q(\bthetaprime) - \frac{1}{q} \Z_q(\boldtheta) - \frac{1}{q (1-q)} \int \expbase^{1-q} \tpi_{\bm{\theta}}(\vx)^{q} \Big( \frac{\tilde{\pi}_{\bthetaprime}(\vx)}{\expbase}^{1-q} -   \frac{\tpi_{\bm{\theta}}(\vx)}{\expbase}^{1-q} \Big)  d\vx \nonumber \\
    &= \frac{1}{q} \Z_q(\bthetaprime) - \frac{1}{q} \Z_q(\boldtheta) - \frac{1}{q (1-q)} \int \Big(  \tilde{\pi}_{\bthetaprime}(\vx)^{1-q} \tpi_{\bm{\theta}}(\vx)^{q} d\vx + \frac{1}{q(1-q)} \int \tpi_{\bm{\theta}}(\vx)   d\vx \nonumber \\
    &= \frac{1}{q}\int \tilde{\pi}_{\bthetaprime}(\vx) d\vx + \frac{1}{1-q} \int \tpi_{\bm{\theta}}(\vx) d\vx - \frac{1}{q(1-q)} \int   \tilde{\pi}_{\bthetaprime}(\vx)^{1-q} \tpi_{\bm{\theta}}(\vx)^{q} d\vx \nonumber \\
    &= D_{A}^{(q)}[\tilde{\pi}_{\bthetaprime} : \tpi_{\bm{\theta}} ] \nonumber
\end{align}
\normalsize
where in $(1)$ we use the fact that $\log_q \frac{\tpi_{\bm{\theta}}(\vx)}{\expbase} = \langle \btheta, \bt \rangle$.

For $q=1$ and the exponential family, we have $\Z(\boldtheta) = \int \tpi_{\boldtheta}(\vx) d\vx$ and $\frac{\partial \Z(\boldtheta)}{\partial \theta^j} =  \int \tpi_{\boldtheta}(\vx) T^j(\vx) d\vx$, 
    which leads to the Bregman divergence
    \begin{align}
    D_{Z}[\bthetaprime : \boldtheta]
    &= \DKL[\tpi_{\boldtheta}(\vx) : \tpi_{\bthetaprime}(\vx)]. \nonumber
\end{align}
By contrast, the divergence generated by the \textit{log} partition function $\log \Z(\boldtheta)$ yields the \textsc{kl} divergence between normalized distributions.  Using $\frac{\partial}{\partial \theta^j} \log \Z(\boldtheta) = \frac{1}{\Z(\boldtheta)} \int \tpi_{\boldtheta}(\vx)T^j(\vx) d\vx = \int \expfam(\vx) T^j(\vx) d\vx$,
we recover the well-known result \citep{amari2016information}
    \begin{align}
    D_{\log \Z}[\bthetaprime : \boldtheta]
    &= \DKL[\pi_{\boldtheta}(\vx) : \pi_{\bthetaprime}(\vx)]. \nonumber
\end{align}

\robsubsection{Parametric Divergence in $\rho = \log_q$ Representation}\label{app:parametric_nonparametric}
\vheader
Alternatively, we may view the parametric divergence $D_{\frac{1}{q} \Z_q}[\bthetaprime: \boldtheta]$ as a decomposable divergence in the $\log_q$-representation $\rho(\tpi_{\btheta}^{(q)}) = \log_q \frac{\tpi_{\btheta}^{(q)}(x)}{g(x)}$, where
\begin{align}
\begin{split}
 \rho(\tpi_{\btheta}^{(q)}) &= \log_q \frac{\tpi_{\btheta}^{(q)}(\vz)}{\expbase}= \langle \btheta, \bt \rangle \qquad   f(\rho) = \frac{1}{q} \left( \exp_q\{\rho\}-\rho -1 \right) \qquad \\ \potentialf\ofb{\rho_{\tpi}} &=  - \frac{1}{q} \int  \langle \btheta, \bt \rangle  d\vz 
     + \frac{1}{q} \Z_q(\btheta)  - \frac{1}{q}. 
    \end{split}
     \label{eq:parametric_rho}
\end{align}
Note, the order of the $\alpha$-divergence is set by the deformation parameter $q$ in the definition of the representation function $\rho_q$ or $q$-exponential family. 
In \mysec{parametric} \cref{example:q_path_para}, we have used this 
Bregman divergence minimization to interpret the $q$-paths between \textit{arbitrary} endpoints from a parametric perspective.

This interpretation also suggests that the form of the deformed family in \cref{eq:qexp_defn_app}, particularly its $\log_q$-linearity in $\btheta$, is sufficient to derive the parametric divergence as a non-parametric rho-tau divergence using \cref{eq:parametric_rho}.

Due to this generality of the non-parametric perspective, we advocate viewing parametric annealing paths within the same $q$-exponential family through the lens of \cref{thm:breg_info} instead of the vector-valued perspective in \cref{thm:breg_info_vector}.   
Indeed, when annealing between parametric deformed exponential family endpoints in \cref{example:annealing}, the quasi-arithmetic mean in the $\rho= \log_q$ representation (using \cref{thm:breg_info}) suggests \textit{linear} or arithmetic mixing of the natural parameters.   This interpretation is more natural, and analogous to the exponential family case, compared to taking the quasi-arithmetic mean of the parameter vectors $\btheta$ directly (using \cref{thm:breg_info_vector}).

\vheader
\robsubsection{Annealing Paths between (Deformed) Exponential Family Endpoint Densities}\label{app:grosse}

The above Bregman divergences can be used to analyze annealing paths in the special case where the endpoint densities
$\tpi_{\boldtheta_0}$ and $\tpi_{\boldtheta_1}$
belong to the same (deformed) exponential family in \cref{eq:qexp_defn_app}.

\begin{example}[Annealing within (Deformed) Exponential Families]\label{example:annealing}
Due to the $\rho$-affine property of deformed exponential families, it is natural to consider the $\rho(\tpi) = \log_q \tpi$ path within the $\exp_q$ family.   

Ignoring the normalization constant, the unnormalized density with respect to $\expbase$ is linear in $\btheta$ after applying the $\rho(\texpfam) = \log \frac{\texpfam(\vz)}{g(\vz)}$ representation function, with $\log \frac{\tpi_{\btheta}(\vz)}{\expbase} = \langle \btheta , \bt \rangle.$  
Since the quasi-arithmetic mean also has this $\rho$-affine property (\myeq{abstract_affine}), we can see that the $q$-path or geometric path between (deformed) exponential endpoints $\tpi_{\btheta_0}(\vz)$ and $\tpi_{\btheta_1}(\vz)$ 
is simply a linear interpolation in the natural parameters 
\begin{align}
    \btheta_\beta &= (1-\beta) \, \btheta_0 + \beta  \, \btheta_1  = \argmin \limits_{\btheta_r} ~ (1-\beta) D_{\frac{1}{q}\mathcal{Z}_q}[\btheta_0 : \btheta_r] + \beta ~ D_{\frac{1}{q}\mathcal{Z}_q}[\btheta_1 : \btheta_r]. \nonumber
\end{align}
which includes the \textsc{kl} divergence and exponential family for $q=1$.
\end{example}

\begin{example}[Moment Averaging Path of \citet{grosse2013annealing}]\label{example:grosse}
In the case of the standard exponential family,
\begin{equation}
\begin{aligned}
    \expfam(\vz) &= \expbase \exp \{ \langle \btheta , \bt \rangle - \psi(\btheta) \} \,\,
    \text{with} \,\,  \psi(\btheta) =  \log \int \expbase \exp \{ \langle \btheta , \bt \rangle \} d\vx , \nonumber
\end{aligned}
\nonumber
\end{equation}
\normalsize
 \citet{grosse2013annealing} propose the \textit{moment averaging} path, which uses the dual parameter mapping $\rho(\btheta) = \boldeta(\btheta) = \mathbb{E}_{\expfam}\left[ \bt \right]$ as a representation function for the quasi-arithmetic mean,
\begin{align}
    \boldeta(\btheta_\beta) &= (1-\beta) \, \boldeta(\btheta_0) + \beta  \, \boldeta(\btheta_1) 
    =\argmin \limits_{\bm{\eta}_r} (1-\beta) D_{\psi^*}[\bm{\eta}_0 : \bm{\eta}_r] + \beta ~ D_{\psi^*}[\bm{\eta}_1 : \bm{\eta}_r]
    \nonumber
\end{align}
\normalsize
for an appropriate dual divergence based on the dual of the log partition function $\psi^*(\bm{\eta}(\bm{\theta})) = D_{KL}[\pi_{\bm{\theta}}(x): g(x)]$ (see \citep{grosse2013annealing}).   While \citet{grosse2013annealing} show 
performance gains
using the moment averaging path, additional sampling procedures may be required to find $\btheta_{\beta}$ via the inverse mapping $\boldeta^{-1}(\cdot)$.
\end{example}

\vheader
\vheader
\vspace*{-3pt}
\section{Information Geometry of Rho-Tau Divergences} \label{app:info_geo}
\vheader 
We next review results from \citep{zhang2013nonparametric} describing the statistical manifolds induced by rho-tau Bregman Informations.  We summarize using our notation in \cref{tab:connections}.
\vspace*{4pt}

\robpara{Eguchi Relations}
The seminal Eguchi relations \cite{eguchi1983second, eguchi1985differential} describe the statistical manifold structure $(\mathcal{M}, g, \nabla, \nabla^*)$ induced by a divergence $D[\firstargn:\secondargn]$.   We first consider a manifold of parametric densities $\mathcal{M}$ represented by a coordinate system $\boldtheta(\pi): \mathcal{M}_{\btheta} \mapsto \Theta \subset \mathbb{R}^N$, with $\partial_i = \frac{\partial}{\partial \theta^i}$ as a basis for the tangent space.
The Riemannian metric is written $g_{ij}(\btheta)
= \langle \partial_i, \partial_j \rangle$, while the affine connection or covariant derivative is expressed using the scalar Christoffel symbols $\Gamma_{ij,k}(\btheta) = \langle \nabla_{\partial_i} \partial_j, \partial_k \rangle$ \citep{amari2000methods, nielsen2018elementary}.
For a given divergence, taking the second and third order differentials yield the following metric and conjugate pair of affine connections
\small
\begin{align}
    g_{ij}(\btheta) = - ( {\partial_j})_{\firstargparam} ( {\partial_k} )_\secondargparam  &D[\firstargparam:\secondargparam] \Big|_{\firstargparam=\secondargparam} \label{eq:metric_param}\\ 
    \Gamma_{ij,k}(\btheta) = - ( {\partial_i})_\firstargparam ( {\partial_j})_\firstargparam ( {\partial_k})_\secondargparam &D[\firstargparam:\secondargparam] \Big|_{\firstargparam=\secondargparam} \label{eq:gamma_param} \\ 
    \Gamma^*_{ij,k}(\btheta) = - ( {\partial_i})_\secondargparam ( {\partial_j})_\secondargparam ( {\partial_k})_\firstargparam &D[\firstargparam:\secondargparam] \Big|_{\firstargparam=\secondargparam} \label{eq:gammastar_param}
\end{align}
\normalsize
where $({\partial_j})_{\firstargparam}$ indicates partial differentiation with respect to the parameter $\theta_{a}^j$ with index $j$ of the first argument.

\robpara{Statistical Manifold from Rho-Tau Divergence} 
Following \citet{zhang2004divergence, zhang2013nonparametric},  
viewing $D_{f,\rho}^{(\beta)}[\tpi_0 : \tpi_1] \coloneqq \frac{1}{\beta(1-\beta)} \, 
   \BregInfo{{f,\rho}} \of{{\vupi, \vwbeta}}$ in \myeq{scaled_breg} as a divergence functional yields the following Riemannian metric and primal affine connection (expressed using the Christoffel symbols $\Gamma_{ij,k}(\boldtheta)$),
\begin{align}
  \hspace*{-.2cm}  g_{ij}(\boldtheta) &= \int \frac{\partial \rho_{\expfam}(\vx)}{\partial \theta^i} \frac{\partial \tau_{\expfam}(\vx)}{\partial \theta^j} d\vx = \int 
  {\rho^{\prime}_{\expfam}(\vx) \tau^{\prime}_{\expfam}(\vx)}\frac{\partial \expfam(\vx)}{\partial \theta^i}\frac{\partial \expfam(\vx)}{\partial \theta^j} d\vz, \label{eq:rhotau_param_metric1} \\[1.5ex]
   \Gamma_{ij,k}(\btheta) &= {\int  {\rho^{\prime}_{\expfam} \tau^{\prime}_{\expfam}} \left( \frac{\partial^2 \expfam}{\partial \theta^i \partial \theta^j} \frac{\partial \expfam}{\partial \theta^k} - \alpha\big(\vx; \rho,\tau,\beta \big) \frac{\partial \expfam}{\partial \theta^i }\frac{\partial \expfam}{\partial \theta^j } \frac{\partial \expfam}{\partial \theta^k}  \right)d\vx} \label{eq:rt_affine_connection} \\
   & \qquad \text{where}\,\, \alpha\big(\vx; \rho,\tau,\beta \big) = -(1-\beta) \frac{\tau^{\prime\prime}_{\expfam}(\vx)}{\tau^{\prime}_{\expfam}(\vx)} - \beta \frac{\rho^{\prime\prime}_{\expfam}(\vx)}{\rho^{\prime}_{\expfam}(\vx)} . \nonumber
\end{align}
\normalsize

\begin{table}[t]
    \centering
    \vspace*{-2pt}
     \resizebox{\textwidth}{!}{
    \begin{tabular}{rccc}
      \small Divergence    &  \small $\tablelabel(\tpi)$ & \small $\tablelabel^*(\tpi)$ &  
      \large $\substack{\text{Outer} \\ \text{Integration}}$
    \\ \toprule
    $\rhobreg[\rho(\firstarg):\rho(\secondarg)]$  & $-\frac{\rho^{\prime\prime}(\tpi)}{\rho^{\prime}(\tpi)}$ &  $-\frac{\tau^{\prime\prime}(\tpi)}{\tau^{\prime}(\tpi)}$ & $\rho^{\prime}(\tpi) \tau^{\prime}(\tpi)$ \\[1.3ex] 
    $\frac{1}{\beta(1-\beta)}\BregInfo{f,\rho} \of{\vupi, \vwbeta}$ &  $-\big((1-\beta) \frac{\tau^{\prime\prime}(\tpi)}{\tau^{\prime}(\tpi)} +  \beta \frac{\rho^{\prime\prime}(\tpi)}{\rho^{\prime}(\tpi)} \big)$  & $-\big( (1-\beta) \frac{\rho^{\prime\prime}(\tpi)}{\rho^{\prime}(\tpi)} + \beta \frac{\tau^{\prime\prime}(\tpi)}{\tau^{\prime}(\tpi)} \big)$ & $\rho^{\prime}(\tpi) \tau^{\prime}(\tpi)$ \\
     \midrule
     &  \multicolumn{2}{c}{\small (\text{all} $\alpha(\beta,\rho,\tau)$ \text{below omit a} $\tpi^{-1}$ \text{factor}) } & 
     \\ \midrule 
 $\DKL[\secondarg:\firstarg]$ ($q=1$) & $1$ & $0$ & $\tpi^{-1}$ \\
  $D_A^{(\alpha)}[\firstarg:\secondarg]$ ($\substack{ \, q = 1 \\ \, \beta = \alpha}$)
  & $\beta$ & $1-\beta$ & $\tpi^{-1}$ \\   \midrule
     $D_A^{(\alpha)}[\firstarg:\secondarg]$ ($\substack{ \, q = \alpha \\ \, \beta = 1}$) & $q$ & $1-q$ & $\tpi^{-1}$ \\
     $D_Z^{(\beta, q)}[\secondarg:\firstarg]$ 
     & $(1-\beta)(1-q) + \beta q$ &  $(1-\beta) q + \beta (1-q) $ &
      $\tpi^{-1}$ \\    \midrule
     $\DKL[\firstarg:\secondarg]$ ($q=0$) & $0$ & $1$ & $\tpi^{-1}$ \\
    \text{\small $\frac{1}{\alpha(1-\alpha)}\DJSa [\firstarg : \secondarg ]$  ($\substack{ \, q = 0 \\ \, \beta = \alpha}$)}
    & $1-\beta$ & $\beta$ & $\tpi^{-1}$ \\ \midrule
     $D_B^{(2-q)}[\secondarg:\firstarg]$  & $q$ & $0$ & $\tpi^{-q}$ \\
     \text{\small $\frac{1}{\beta(1-\beta)}$ \text{Breg. Info $D_{B}^{(2-q)}$}} & $\beta q$ & $(1-\beta) q$ & $\tpi^{-q}$
     \\ \midrule
          $D_C^{(q,\lambda)}[\firstarg:\secondarg]$  &  $ q $ & $1-\lambda$ & $\tpi^{\lambda - 1 - q}$ \\
     \text{\small $\frac{1}{\beta(1-\beta)}$ \text{Breg. Info $D_C^{(q,\lambda)}$}} & $(1-\beta)(1-\lambda) + \beta q $ & $(1-\beta)q + \beta (1-\lambda) $ & $\tpi^{\lambda - 1 - q}$ 
     \\ \bottomrule
    \end{tabular}
      }
       \vspace*{-.1cm}
    \caption{ Dual pair of affine connections induced by divergence functions considered in \mysec{examples}, where 
    $\alpha(\beta,\rho,\tau)$ refers to \myeq{rt_affine_connection} for primal and dual  connections $\Gamma(\tpi)$ and $\Gamma^*(\tpi)$.  The factor $\rho^{\prime}(\tpi)\tau^{\prime}(\tpi)$ also specifies the Riemannian metric in \myeq{rhotau_param_metric1}.  
    In each pair of rows, we list the rho-tau Bregman divergence $D_f[\rho({\firstarg}):\rho({\secondarg})]$ and induced Bregman Information, where values of $q$ indicate the order of $\rho(\tpi) = \log_q(\tpi)$.
    For the Jensen-Shannon divergence, note that we multiply by a factor $\frac{1}{\beta(1-\beta)}$ to ensure the standard form for $f$-divergences, with $f^{\prime\prime}(1)=1$.
    }
    \label{tab:connections}
\vheader
\end{table}

Since our exposition in \cref{sec:rhotau} and \cref{sec:examples} considers a nonparametric manifold of arbitrary unnormalized densities, we also recall the nonparametric analogues of \cref{eq:rhotau_param_metric1} and \cref{eq:rt_affine_connection} from \citet{zhang2013nonparametric}.\footnote{ While rigorous constructions of such statistical manifolds are considered in \citep{pistone1995infinite, grasselli2010dual, loaiza2013q, loaiza2013riemannian, ay2017information}, we assume the manifold and tangent spaces are well-defined (as is done in \citep{zhang2013nonparametric}).}
For tangent vectors $u(\vx), v(\vx), w(\vx)$ (such that $\int u(\vx) d\vx = 0$) at a point $\tpi(\vx)$, 
\begin{align}
g_{u,v}({\tpi})&= \langle u, v \rangle  
=\int 
\rho^{\prime}_{\tpi}{(\vx)} \tau^{\prime}_{\tpi}{(\vx)} \, 
u(\vz) v(\vz) d\vz \label{eq:rhotau_metric} \\
 \GammaRT{\beta}_{wu,v}(\tpi) &= \langle \nabla^{(\beta)}_w u, v \rangle    \label{eq:rhotau_connection}  \\
   &=   \int \rho^{\prime}_{\tpi}{(\vx)} \tau^{\prime}_{\tpi}{(\vx)} \Big(
 \big( d_w u(\vx) \big) v(\vx) 
 -\alpha\big(\vx; \rho,\tau,\beta \big)   
 u(\vx) w(\vx) v(\vx) \Big) d\vx . \nonumber \\
    & \qquad \text{where}\,\, \alpha\big(\vx; \rho,\tau,\beta \big) = -(1-\beta) \frac{\tau^{\prime\prime}_{\tpi}(\vx)}{\tau^{\prime}_{\tpi}(\vx)} - \beta \frac{\rho^{\prime\prime}_{\tpi}(\vx)}{\rho^{\prime}_{\tpi}(\vx)} . \nonumber
\nonumber
\end{align}
where $d_w u$ is the directional derivative of $u$ in the direction of $w$ and $\rho^{\prime}_{\tpi}{(\vx)} = \rho^{\prime}\of{\tpi(\vx)}$.
The parametric expression above can be recovered using, for example, $u(\vx) = \frac{\partial}{\partial \theta^i}\expfam(\vx)$.

\vspace*{5pt}
\robpara{Riemannian Metrics}
To recover the Fisher-Rao metric, we may consider the $\rho(\tpi) = \log \tpi$ and $\tau(\tpi) = \pi$ representations, which yields $\rho^\prime(\tpi) \tau^\prime(\tpi) = \tpi^{-1}$ as desired.
However, the Fisher-Rao metric may also be recovered using the representations $\rho(\tpi) = \log_q(\tpi)$, $\tau(\tpi) = \log_{1-q} \tpi$ used to derive the $\alpha$-divergence in \cref{example:alpha} (see e.g. \citet{nielsen2018elementary} Sec. 3.12).
Finally, the Jensen-Shannon divergence and Zhang's $(\beta, q)$ divergence also induce the Fisher-Rao metric due to the fact that the outer integration term $\rho^\prime(\tpi)\tau^\prime(\tpi) = \tpi^{-1}$, while the metric for the Beta divergence integrates $\rho^\prime(\tpi)\tau^\prime(\tpi) = \tpi^{-q}$ and thus may be referred to as a `deformed' metric \citep{naudts2018rho, zhang2021entropy}.

\robpara{Affine Connections}
Recall that the standard $\alpha$-connection \citep{amari1982differential, amari2000methods} is given by $\alpha(x;\rho,\tau,\beta) = \alpha \cdot \pi_{\boldtheta}(\vx)^{-1}$.   From \cref{tab:connections}, we see that the $\alpha$-connection with $\alpha = \beta$ is induced from \textit{either} the Amari $\alpha$-divergence or the Jensen-Shannon divergence, which are the rho-tau Bregman Information corresponding to the \kl divergences in either direction.   When treating the $\alpha$-divergence as a rho-tau Bregman divergence (as in \cref{example:alpha}) instead of a Bregman Information (as in \cref{example:geo}), we see that the order of the $\alpha$-connection is set by the representation parameter $\alpha = q$.   This mirrors the observations in \cref{sec:cichocki}.
Finally, note that the Beta divergence (as a rho-tau Bregman divergence) induces the alpha connections of order $q$ and $0$, instead of $1$ and $0$ for the \kl divergence or $q$ and $1-q$ for the Amari $\alpha$-divergence.

The divergence functionals derived from the rho-tau Bregman Information for either the Beta, $\alpha$, or Amari-Cichocki $(q,\lambda)$ divergences induce $\alpha$-connections which interpolate between the endpoint values based on the mixture parameter $\beta$.

\robpara{Limiting Behavior of Amari-$\alpha$ and Beta-Divergences}
Finally, in \cref{tab:limiting_kl}, we recall the limiting behavior of the $\alpha$ divergences as $q\rightarrow 0$ or $q \rightarrow 1$, and the Beta divergence as $q \rightarrow 1$ or $q \rightarrow 2$.   While these families of divergences agree in their limiting behavior as $q \rightarrow 1$, the Beta divergence recovers either the Euclidean Bregman divergence or Itakura-Saito divergence for $q = 0$ and $q \rightarrow 2$ respectively.

\newcommand{\tablesep}{\\[1.5ex]}
\small
\begin{table}[t]
    \centering
    \resizebox{\textwidth}{!}{
    \begin{tabular}{rrrrr}
    \toprule
   {{$\tau$-Deformed}} & Convex $f, f^*$ &  $q\rightarrow 0 $ & $q\rightarrow 1$ & $q = 2$   \\ \midrule 
 $\rho(\tpi) = \log_q \tpi$ & $f(\rho) = c \exp_q{\rho}$ &   $\DKL[\firstarg:\secondarg]$ & $\DKL[\secondarg:\firstarg]$ & $D_{\chi^2}[\secondarg:\firstarg]$   \tablesep
 $\tau(\tpi) = \log_{1-q}\tpi$ & $f^*(\tau) = c \exp_{1-q}{\rho}$ &   $\DKL[\secondarg:\firstarg]$ & $\DKL[\firstarg:\secondarg]$ &  $D_{\chi^2}[\firstarg:\secondarg]$   \tablesep \midrule
    {{$\tau$-id}}  &  Convex $f, f^*$   & $q = 0$ & $q\rightarrow 1$  & $q \rightarrow 2$     \\ \midrule 
 $ \rho(\tpi) = \log_q \tpi$ & $f(\rho) =c (\exp_{q}{\rho})^{2-q}$ &  $\frac{1}{2}\|\firstarg - \secondarg\|_2^2$ & $\DKL[\secondarg:\firstarg]$ & $\DIS[\secondarg:\firstarg]$ \tablesep
  $\tau(\tpi) = \tpi$   &  $f^*(\tau) = c \log_{q-1}{\tau}$ &  $\frac{1}{2}\|\firstarg - \secondarg\|_2^2$ & $\DKL[\firstarg:\secondarg]$ &  $\DIS[\secondarg:\firstarg] $ \tablesep 
 \bottomrule
    \end{tabular}
     }
    \caption{ Limiting Behavior in $q$ for Amari $\alpha$ ($\tau$-deformed) and Beta-divergences ($\tau$-id) as rho-tau Bregman Divergences.  The $\rho, f$ rows indicate the behavior of $\rhobreg[\rho({\firstarg}):\rho({\secondarg})]$, and the $\tau, f^*$ rows indicate the behavior of $\taubreg[\tau({\firstarg}):\tau({\secondarg})]$, with $\rhobreg[\rho({\firstarg}):\rho({\secondarg})] = \taubreg[\tau({\secondarg}):\tau({\firstarg})]$.  Note that $D_{\chi^2}$ indicates Pearson's $\chi^2$ divergence and $\DIS$ indicates the Itakura-Saito divergence.}
    \label{tab:limiting_kl}
\end{table}
\normalsize

\section{Geodesics for the Rho-Tau Bregman Divergence}\label{app:qgeo}
In this section, we show that the quasi-arithmetic mixture path in the $\rho(\tpi)$ representation of densities is a geodesic with respect to the primal connection induced by the rho-tau divergence $\rhobreg[\rho(\tpi_1):\rho(\tpi_0)]$.
Recall from \citet{zhang2013nonparametric} Sec. 2.3 (Eq. 88) that the $\alpha$-connection (or covariant derivative) associated with the rho-tau Bregman divergence has the form
\begin{align}
    \nabla^{(\alpha)}_{\dot{\gamma}} \dot{\gamma} &= (d_{\dot{\gamma}} \dot{\gamma})_{\gamma_t} + \frac{d}{d\gamma} \big( \alpha \log \rho^{\prime}(\gamma_t) + (1-\alpha) \log \tau^{\prime}(\gamma_t) \big) \cdot \dot{\gamma}^2 \label{eq:affine_alpha} \\
    &= (d_{\dot{\gamma}} \dot{\gamma})_{\gamma_t} +  \left( \alpha \frac{ \rho^{\prime\prime}(\gamma_t)}{\rho^{\prime}(\gamma_t)} + (1-\alpha) \frac{ \tau^{\prime\prime}(\gamma_t)}{\tau^{\prime}(\gamma_t)} \right) \cdot \dot{\gamma}^2 \nonumber
\end{align}
where $\dot{\gamma} = \frac{d\gamma}{dt}$ and the parameter $\alpha$ plays the role of the convex combination or mixture parameter $\beta$.    In this section, we use the notation $\nabla^{(\alpha)}$ to represent the affine connection, instead of the Christoffel symbol notation from e.g. \myeq{rhotau_connection}, with $\Gamma^{(\alpha)}_{wu,v}(\tpi) = \langle \nabla^{(\alpha)}_w u, v \rangle$.

We are interested in the Bregman divergence and $\nabla^{(1)}$ connection for $\alpha =1$.   Using \myeq{abstract_affine}, we need to show that the geodesic equation $ \nabla^{(1)}_{\dot{\gamma}} \dot{\gamma} = 0$ holds (\cite{nielsen2018elementary} Sec. 3.12) for curves which are linear in the $\rho$-representation.

\rhotau*
\begin{proof}
We simplify each of the terms in the geodesic equation, where we rewrite the desired geodesic equation in \myeq{geodesic_eq} to match Eq. 88 of \citet{zhang2013nonparametric},
\begin{align}
    \nabla^{(1)}_{\dot{\gamma}} \dot{\gamma}  = 
    d_{\dot{\gamma}} \dot{\gamma}
    + \big(\dot{\gamma}\big)^{2} \cdot  \left(\frac{d}{d\gamma} \log {\rho^{\prime}(\gamma)}
    \right) = 0
\end{align}
First, note the particularly simple expression for $\frac{d\rho(\gamma_t)}{dt} = \rho(\tpi_1) - \rho(\tpi_0)$ given the definition $\rho(\gamma(t)) = (1-t) \rho(\tpi_0) + t \, \rho(\tpi_1)$.  Noting the chain rule $\frac{d\rho_t}{dt} = \frac{d\rho(\gamma_t)}{d\gamma} \frac{d\gamma_t}{dt}$, we can rearrange to obtain an expression for $\dot{\gamma}(t)$
\begin{align}
    \dot{\gamma}(t) = \frac{d\gamma(t)}{dt} = \left(\frac{d\rho(\gamma_t)}{d\gamma}\right)^{-1} \frac{d\rho(\gamma_t)}{dt} =  \left(\frac{d\rho(\gamma_t)}{d\gamma}\right)^{-1} \left( \rho(\tpi_1) - \rho(\tpi_0) \right)  \label{eq:dot_gamma}
\end{align}
Taking the directional derivative $d_{\dot{\gamma}}\dot{\gamma} = \frac{d\dot{\gamma}}{d\gamma}\cdot \dot{\gamma}$,
\begin{align}
    d_{\dot{\gamma_t}}\dot{\gamma_t}  &= \dot{\gamma_t} \cdot \frac{d}{d\gamma} \left[ \left(\frac{d\rho(\gamma_t)}{d\gamma}\right)^{-1} \left( \rho(\tpi_1) - \rho(\tpi_0) \right) \right] \\
    &=  - \dot{\gamma_t}\left( \rho(\tpi_1) - \rho(\tpi_0) \right) \frac{d\rho(\gamma_t)}{d\gamma}^{-2} \frac{d^2\rho(\gamma_t)}{d\gamma^2} \nonumber
\end{align}
Rewriting the final term in \myeq{geodesic_eq}, we have
\begin{align}
    \frac{d}{d\gamma} \big(  \log \rho^{\prime}(\gamma_t) \big) &= \frac{d}{d\gamma} \big(  \log \frac{d \rho(\gamma_t)}{d\gamma}  \big) =  \left(\frac{d\rho(\gamma_t)}{d\gamma}\right)^{-1} \frac{d^2  \rho(\gamma_t) }{d\gamma^2}.  
\end{align}
Putting it all together, we have
\begin{align}
    \nabla^{(1)}_{\dot{\gamma_t}} \dot{\gamma_t}  &= -\frac{d \dot{\gamma_t}}{d\gamma} \cdot \dot{\gamma_t} + \big(\dot{\gamma_t} \big)^{2} \cdot \frac{d}{d\gamma} \big(  \log \rho^{\prime}(\gamma_t) \big) \\
    &= -\dot{\gamma_t} \left( \rho(\tpi_1) - \rho(\tpi_0) \right) \frac{d\rho(\gamma_t)}{d\gamma}^{-2} \frac{d^2\rho(\gamma_t)}{d\gamma^2} + (\dot{\gamma_t})^2 \left(\frac{d\rho(\gamma_t)}{d\gamma}\right)^{-1} \frac{d^2  \rho(\gamma_t) }{d\gamma^2} \nonumber \\
\intertext{Noting that $\frac{d\rho(\gamma_t)}{d\gamma}^{-1} = \dot{\gamma}(\rho(\tpi_1) - \rho(\tpi_0))$ from \myeq{dot_gamma}, we have }
\nabla^{(1)}_{\dot{\gamma_t}} \dot{\gamma_t} &= -\dot{\gamma_t} \cancel{\frac{\rho(\tpi_1) - \rho(\tpi_0)}{\rho(\tpi_1) - \rho(\tpi_0)}} \cdot \dot{\gamma_t}  \frac{d\rho(\gamma_t)}{d\gamma}^{-1} \frac{d^2\rho(\gamma_t)}{d\gamma^2} + (\dot{\gamma_t})^2 \left(\frac{d\rho(\gamma_t)}{d\gamma}\right)^{-1} \frac{d^2  \rho(\gamma_t) }{d\gamma^2} \nonumber \\
&= -(\dot{\gamma_t})^2  \left(\frac{d\rho(\gamma_t)}{d\gamma}\right)^{-1} \frac{d^2\rho(\gamma_t)}{d\gamma^2} + (\dot{\gamma_t})^2 \left(\frac{d\rho(\gamma_t)}{d\gamma}\right)^{-1} \frac{d^2  \rho(\gamma_t) }{d\gamma^2} \nonumber \\
&=0 \nonumber
\end{align}
which proves the proposition.\qed
\end{proof}

\end{document}